\documentclass{article}

\usepackage{iclr2026_conference,times}


\usepackage{amsmath,amsfonts,bm}









\def\eqref#1{equation~\ref{#1}}









\def\1{\bm{1}}










\DeclareMathAlphabet{\mathsfit}{\encodingdefault}{\sfdefault}{m}{sl}
\SetMathAlphabet{\mathsfit}{bold}{\encodingdefault}{\sfdefault}{bx}{n}













\DeclareMathOperator*{\argmin}{arg\,min}

\usepackage[dvipsnames]{xcolor}         
\usepackage[utf8]{inputenc} 
\usepackage[T1]{fontenc}    
\usepackage{hyperref}       
\usepackage{url}            
\usepackage{booktabs}       
\usepackage{amsfonts}       
\usepackage{nicefrac}       
\usepackage{amssymb}
\usepackage{amsthm}
\usepackage{graphicx}
\usepackage{dsfont}
\usepackage{amsmath}

\usepackage{algorithm}
\usepackage{algpseudocode}
\usepackage{setspace}

\usepackage{multirow}

\usepackage{wrapfig}

\usepackage{xspace}

\usepackage{etoolbox}

\usepackage{pifont}
\newcommand{\xmark}{\textcolor{red!80!black}{\ding{55}}}    

\def\expe{\mathbb{E}}
\def\eif{\mathbb{IF}}
\newcommand{\di}[1]{\mathop{\mathrm{d}#1}}
\newcommand\independent{\protect\mathpalette{\protect\independenT}{\perp}}
\def\independenT#1#2{\mathrel{\rlap{$#1#2$}\mkern2mu{#1#2}}}
\newcommand{\ind}{\mathds{I}}

\definecolor{red_pie}{HTML}{FF3333}
\definecolor{blue_pib}{HTML}{3333FF}
\definecolor{state_blue}{HTML}{DAE8FC}
\definecolor{state_blue_border}{HTML}{6C8EBF}
\definecolor{action_green}{HTML}{D5E8D4}
\definecolor{action_green_border}{HTML}{82B366}
\definecolor{reward_orange}{HTML}{FFE6CC}
\definecolor{reward_orange_border}{HTML}{FFB366}

\newtheorem{theorem}{Theorem}
\newtheorem{lemma}{Lemma}
\newtheorem{corollary}{Corollary}

\usepackage{tikz}
\usetikzlibrary{arrows}

\DeclareRobustCommand\circledblue[1]{\tikz[baseline=(char.base)]{
            \node[shape=circle,draw=RoyalBlue!60,fill=RoyalBlue!10,thick,inner sep=1pt] (char) {\scriptsize\textsf#1};}}
\DeclareRobustCommand\circledred[1]{\tikz[baseline=(char.base)]{
            \node[shape=circle,draw=red!60,fill=red!10,thick,inner sep=1pt] (char) {\scriptsize\textsf#1};}}
\DeclareRobustCommand\circledgreen[1]{\tikz[baseline=(char.base)]{
            \node[shape=circle,draw=ForestGreen!60,fill=ForestGreen!10,thick,inner sep=1pt] (char) {\scriptsize\textsf#1};}}

\DeclareRobustCommand\circledpurple[1]{\tikz[baseline=(char.base)]{
            \node[shape=circle,draw=Plum!60,fill=Plum!10,thick,inner sep=1pt] (char) {\scriptsize\textsf#1};}}

\usetikzlibrary{shapes.geometric}

\DeclareRobustCommand\staremoji{\begin{tikzpicture}
  \node[star,star points=5,
        star point ratio=2.25,
        minimum size=.25cm,
        inner sep=0pt,
        draw=black,thin,
        fill=yellow] {};
\end{tikzpicture}}

\newcommand{\method}{\mbox{DR$Q$-}learner\xspace}


\title{An Orthogonal Learner for Individualized Outcomes in Markov Decision Processes}

\author{
Emil Javurek
\thanks{corresponding author
} \\
LMU Munich \& MCML \\
\texttt{emil.javurek@lmu.de} \\
\And
Valentyn Melnychuk \\
LMU Munich \& MCML \\
\texttt{melnychuk@lmu.de}
\And
Jonas Schweisthal \\
LMU Munich \& MCML \\
\texttt{jonas.schweisthal@lmu.de}
\And
Konstantin Hess \\
LMU Munich \& MCML \\
\texttt{k.hess@lmu.de}
\And
Dennis Frauen \\
LMU Munich \& MCML \\
\texttt{frauen@lmu.de}
\And
Stefan Feuerriegel \\
LMU Munich \& MCML \\
\texttt{feuerriegel@lmu.de}
}

\iclrfinalcopy
\begin{document}

\maketitle

\begin{abstract}
Predicting individualized potential outcomes in sequential decision-making is central for optimizing therapeutic decisions in personalized medicine (e.g., which dosing sequence to give to a cancer patient). However, predicting potential outcomes over long horizons is notoriously difficult. Existing methods that break the curse of the horizon typically lack strong theoretical guarantees such as orthogonality and quasi-oracle efficiency. In this paper, we revisit the problem of \textit{predicting individualized potential outcomes in sequential decision-making} (i.e., estimating Q-functions in Markov decision processes with observational data) through a causal inference lens. In particular, we develop a comprehensive theoretical foundation for meta-learners in this setting with a focus on beneficial \textit{theoretical properties}. As a result, we yield a novel meta-learner called \method and establish that it is: (1) doubly robust (i.e., valid inference under the misspecification of one of the models), (2) Neyman-orthogonal (i.e., insensitive to first-order estimation errors in the nuisance functions), and (3) achieves quasi-oracle efficiency  (i.e., behaves asymptotically as if the ground-truth nuisance functions were known). Our \method is applicable to settings with both discrete and continuous state spaces. Further, our \method is flexible and can be used together with arbitrary machine learning models (e.g., neural networks). We validate our theoretical results through numerical experiments, thereby showing that our meta-learner outperforms state-of-the-art baselines. 
 
\end{abstract}

\section{Introduction}


Predicting individualized potential outcomes in sequential decision-making is central for optimizing therapeutic decisions in personalized medicine \citep{Feuerriegel.2024}. Typical examples are selecting dosage schedules for cancer patients \citep{Zhao.2009,Wang.2012}, scheduling just-in-time interventions in digital health \citep{Liao.2021,Battalio.2021}, or determining treatment schedules for chronic diseases \citep{Shortreed.2011,Matsouaka.2014}. In recent years, this problem has been increasingly studied using observational data (e.g., electronic health records) to avoid ``exploration'' and leverage the increasing availability of digital patient data \citep{Allam.2021,Bica.2021b}. 


Here, we focus on predicting individualized potential outcomes in Markov decision processes (MDPs), i.e., \textit{estimating the $Q$-function from observational data}. This task has received much attention in off-policy reinforcement learning \citep[e.g.,][]{Liu.29102018,LeM.3202019,Uehara.28102019}, where many approaches have focused on delivering new learners, with a focus on addressing the curse of horizon. However, comparatively little attention has been given to developing methods in a principled way with theoretical guarantees such as orthogonality or quasi-oracle efficiency.

\begin{figure}
\centering
\includegraphics[width=.8\linewidth]{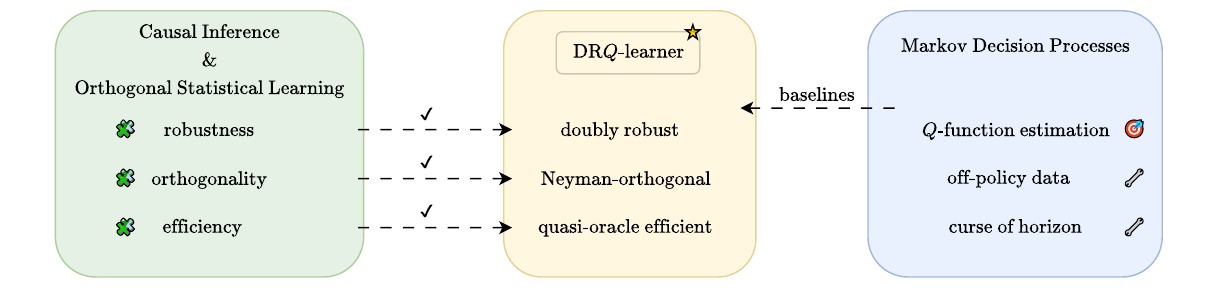}
\vspace{-10pt}
\caption{\textbf{Our work is located at the intersection of \circledblue{1} causal inference \& orthogonal statistical learning and \circledblue{2} MDPs.} Our \textit{problem setup} is in \circledblue{2}: we estimate Q-functions in MDPs from off-policy data. Baselines for this task break the curse of the horizon but typically lack strong theoretical guarantees. Our \textit{method} adopts concepts from \circledblue{1}: we obtain a novel meta-learner called \method that is doubly robust, Neyman-orthogonal, and quasi-oracle efficient. }
\label{fig:contributions}
\vspace{-15pt}
\end{figure}


In this paper, we study the problem of estimating $Q$-functions in MDPs from observation data through the theoretical lens of causal inference. In particular, we \textit{develop a theoretical foundation} based on statistical orthogonality theory \citep{Foster.1252019}, which offers a novel perspective on this task (see Figure~\ref{fig:contributions}). For this, we first derive identifiability results and show that several of the existing baselines correspond to na{\"i}ve plug-in learners, which are known to be biased. As a remedy, we next derive the efficient influence function of the training loss and use it to construct a debiased second-stage loss that is Neyman-orthogonal.

As a result, we obtain a novel meta-learner for this task, which we call \textbf{\method}. The \method enjoys several favorable theoretical properties: (1)~it is \textit{doubly robust}, which enables valid inference even under model misspecification; (2)~it is \textit{Neyman-orthogonal}, which makes it insensitive to first-order estimation errors in the nuisance functions; and (3)~it achieves \textit{quasi-oracle efficiency}, meaning it attains the same asymptotic performance as if the ground-truth nuisance functions were known. The \method is applicable to settings with both discrete and continuous state spaces. Moreover, the \method is flexible and can be used together with arbitrary machine learning models such as neural networks. 

Our \textbf{contributions} are three-fold:\footnote{
\mbox{Code is available at \url{https://github.com/EmilJavurek/Orthogonal-Q-in-MDPs}}.} 
\begin{itemize}
\setlength{\leftskip}{-0.7cm} 
\vspace{-0.3cm}
\item \textbf{\textit{New theoretical contributions.}} We provide a theoretical framework
of causal inference to $Q$-function estimation in MDPs. While causal inference has long been used to address statistical challenges in treatment effect estimation from observational data, we extend these ideas to formalize -- and solve -- the challenges of estimating $Q$-functions from observational data. In this setting, interventions induce a distributional shift between behavior and evaluation policies; although inverse propensity weighting (IPW) can address this, IPW suffers from exponentially decaying overlap in sequential settings (i.e., the \textit{curse of horizon}), leading to instability from division by near-zero probabilities and making consistent estimation of potential outcomes impossible. By leveraging statistical orthogonality theory, we derive a novel meta-learner for \textit{valid inference with favorable statistical properties}.

\item \textbf{\emph{New method.}} We propose the \textit{first} meta-learner for $Q$-function estimation that is simulatenously (i)~\textit{doubly robust}, (ii) \textit{Neyman-orthogonal}, \underline{and} \textit{quasi-oracle efficient}. Hence, this is unlike methods that rely, for example, on IPW  and are thus Neyman-orthogonal but fail to break the curse of horizon; the \method avoids this issue and achieves all three properties while still addressing the curse of the horizon. Importantly, quasi-oracle efficiency of our method guarantees \textit{convergence at the same rate as if oracle nuisance functions were known}. We thereby aim to make an important contribution to \textit{reliable} inference in personalized medicine where strong theoretical guarantees are important. 

\item \textbf{\textit{Empirical performance.}} The primary objective of our numerical experiments is to \textit{validate our theoretical results}. Hence, we run various numerical experiments and show that the \method is \textit{especially effective for low overlap settings in line with our theory}. Overall, our results demonstrate state-of-the-art empirical performance.

\end{itemize}

\section{Related work}

We group our literature review along streams that are relevant: (1)~We review theoretical foundations from causal inference and orthogonal statistical learning to motivate our method, and (2)~discuss prior work on off-policy Q-function estimation in MDPs. The latter defines our problem setup, while the former shares parallels in terms of the overall methodological approach to formalize causal quantities. We provide an extended literature review in Appendix~\ref{app:extended_related_work}.

\textbf{Causal inference and orthogonal learning:} Both the theory of orthogonal statistical learning \citep{Foster.1252019} and semiparametric efficiency theory \citep{Vaart.1998} have been widely used to construct estimators with strong theoretical properties. Here, a particular focus is on influence-function-based estimators \citep{Kennedy.12032022}, with well-known examples such as targeted maximum likelihood estimation (TMLE) \citep{DanielRubin.2006}, the DoubleML framework \citep{Chernozhukov.2018}, and doubly robust approaches for off-policy policy value estimation \citep{Kallus.12092019,Shi.5102021}. These techniques have been extended to the estimation of individualized treatment effects \citep{Foster.1252019}, leading to a broad class of orthogonal meta-learners \citep{Kennedy.30042020,Nie.13122020,Morzywolek.22032023}. Similarly, meta-learners have been proposed for estimating individualized treatment effect estimation over time \citep{Frauen.07072024}. However, works on individualized treatment effect estimation over time do \textit{\textbf{not}} focus on the MDP setting and are well to known to suffer from the curse of horizon \citep{Kallus.12092019}. Importantly, a similar theoretical framework for individualized potential outcome estimation in MDPs is still missing.

\textbf{Off-policy $Q$-function evaluation:} 
Several methods have been developed for estimating $Q$-function from MDPs in off-policy settings, that is, using observational data \citep[e.g.][]{Liu.29102018,LeM.3202019,Uehara.28102019}. A common theme in these works is to address the curse of horizon \citep[e.g.,][]{LeM.3202019,Uehara.28102019}. We refer to Appendix~\ref{app:extended_related_work} for a more detailed overview\footnote{Many of these works focus on off-policy evaluation (and thus target \textit{scalar average} outcomes), where methods for $Q$-function evaluation are often a necessary first step (e.g., \citet{Shi.5102021} propose a method for interval estimation that yields a Q-function evaluation as byproduct)}. 

The above works have been developed typically outside of causal inference and thus without explicitly formalizing the underlying estimand as a causal quantity. One of our contributions is to link causal inference and $Q$-function evaluation from observational data by formalizing the underlying causal estimand. This allows us later to taxonomize prominent works from the literature based on the underlying adjustment strategy. For example, in our framework, existing works correspond to adjustment strategies based on inverse-propensity-weighting-like nuisances (e.g., Q-regression \citep{Liu.29102018}) or implicit adjustment strategies based on (supervised learning) target construction (e.g., FQE \citep{LeM.3202019}). From our causal inference perspective, we later obtain new theoretical insights to understand the failing modes of existing methods. In particular, we show that several state-of-the-art methods suffer from so-called plug-in bias \citep{Kennedy.12032022} and potential instability under model misspecification. To the best of our knowledge, more advanced adjustment strategies, which are commonly used in causal inference, are missing in the literature on $Q$-function evaluation. Consequently, no prior work has developed a Neyman-orthogonal meta-learner for off-policy $Q$-function estimation.

\textbf{Research gap:} To the best of our knowledge, a method for $Q$-function evaluation in MDPs with observational data that enjoys favorable theoretical properties - such as Neyman-orthogonality and quasi-oracle efficiency - is missing. As a remedy, we first reframe off-policy $Q$-function evaluation through the lens of causal inference and then develop a new meta-learner called \method.

\vspace{-0.4cm}
\section{Problem formulation}
\vspace{-0.3cm}

\begin{figure}
    \vspace{-0.5cm}
    \centering
    \includegraphics[width=14cm]{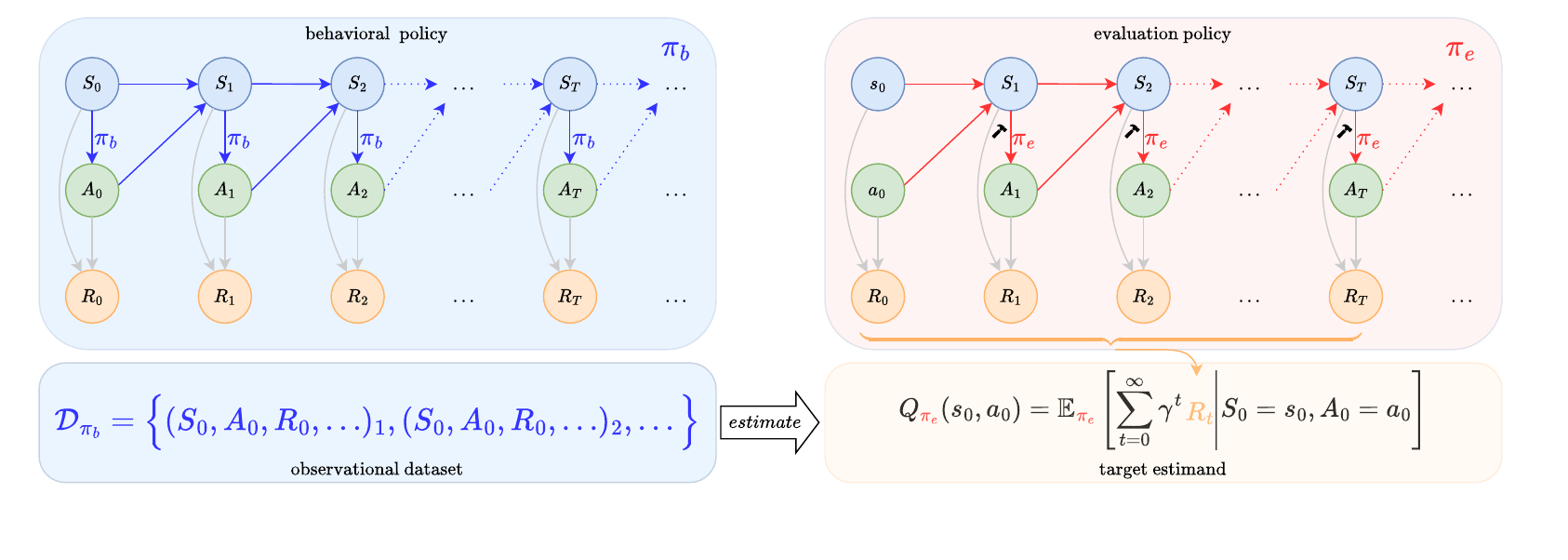}
    \vspace{-35pt}
    \caption{
    \footnotesize{
    \textbf{Our task: we aim to estimate $Q_{\textcolor{red_pie}{\pi_e}}$, a functional of the unobserved evaluation policy $\textcolor{red_pie}{\pi_e}$ (right), from the observational dataset \textcolor{blue_pib}{$\mathcal{D}_{\pi_b}$} from the behavioral policy \textcolor{blue_pib}{$\pi_b$}} (left).
    A trajectory from a time-invariant Markov decision process (MDP) is determined by environment dynamics (gray) and by selecting actions according to a policy. We observe the MDP with \textcolor{blue_pib}{$\pi_b$} (top left), while a potential MDP with \textcolor{red_pie}{$\pi_e$} (top right) is unobserved. Our target estimand $Q_{\textcolor{red_pie}{\pi_e}}$ must thus be estimated from available observational data \textcolor{blue_pib}{$\mathcal{D}_{\pi_b}$}.
    }
    }
    \vspace{-5pt}
    \label{fig:setting_and_task}
\end{figure}


\textbf{Notation:} We denote random variables by capital letters $S,A,R$ and their realizations by small letters $s,a,r$ from domains $\mathcal{S},\mathcal{A},\mathcal{R}$. Let $\mathbb{P}(S)$ denote a distribution of some random variable S, and let $p(S=s)$ be a corresponding density or probability mass function, and let $\mathcal{P}(\mathcal{S})$ denote the set of all probability distributions over $\mathcal{S}$. We write $\mathbb{E}_{\pi}[\cdot] := \mathbb{E}_{\mathbb{P}_{\pi}}[\cdot]=\int \cdot\, \mathrm{d}\mathbb{P}_{\pi}$ to denote expectation with respect to a distribution $\mathbb{P}_{\pi}$ arising from a stochastic process created by following the MDP with policy $\pi$ (equivalently, $\mathbb{E}_{x\sim P}[\cdot]$ when $x\sim P$).

\textbf{Data-generating process:} We consider the following definition of a time-invariant MDP, as is common in the literature and in many practical applications\footnote{
Importantly, we can always incorporate any historical information into state variable, simply by concatenating both and creating an “augmented” state variable where $\Tilde{S} = (S,H)$ where $H$ is any additional historical information we wish to store beyond any natural concept of state. The Markov framework will natively apply to such a setup. As such, the MDP setting can also be seen as a generalization of contextual bandits and some dynamic treatment regime (DTR) setups. Specifically, contextual bandits are a one-step MDP with no state transitions (or equivalently, all state transitions are equally possible, i.e., irrelevant). DTR setups usually operate on short time horizons since without assuming Markovianity (=MDP), they invariably also suffer from the curse of horizon. We refer to Appendix~\ref{app:extended_related_work} for extended related work regarding DTR and the relevant overview materials.
} \citep{Uehara.28102019,Shi.5102021,Kallus.12092019}. Formally, a time-invariant MDP is given by tuple $ \langle \mathcal{S}, \mathcal{A}, \mathcal{R}, p_r, p_s, \gamma \rangle $ with: (i)~$\mathcal{S}$ is the state space that can be discrete or continuous; (ii)~$\mathcal{A}$ is the action space; (iii)~$\mathcal{R}$ is the reward space, (iv)~$p_r : \mathcal{S} \times \mathcal{A} \rightarrow \mathcal{P}(\mathcal{R})$ is the reward distribution, (v)~$p_s : \mathcal{S} \times \mathcal{A} \rightarrow \mathcal{P}(\mathcal{S})$ is the stochastic state transition distribution, and (vi)~$\gamma \in (0,1)$ is the discount rate for future rewards.  
A trajectory $\{(S_t,A_t,R_t)\}_{t\geq 0}$ is generated by following a stationary stochastic policy $\pi$:  at time step $t$, a decision-maker in state $S_t = s \in \mathcal{S}$ selects an action $A_t=a \in \mathcal{A}$ with probability $\pi(A_t = a \mid S_t = s)$, a reward $R_t=r$ is observed according to the law $R_t \sim p_r(s,a)$, and one transitions to a new state $S_{t+1} = s', S_{t+1} \sim p_s(s,a)$.


In our data-generating process, we assume (i)~that the time-invariant MDP model has the Markov property $\mathbb{P}(S_{t+1}=s \mid \{S_j,A_j,R_j\}_{0\leq j \leq t}) = p_s(s \mid S_t,A_t)$ and (ii)~that the conditional mean independence property holds, i.e., $\mathbb{E}[R_t \mid \{S_j,A_j,R_j\}_{0\leq j \leq t-1},S_t,A_t) = p_r(S_t,A_t)$. Together, the assumptions (i) and (ii) guarantee the existence of an optimal stationary policy \citep{Puterman.1994} and permit us to decompose a dataset of i.i.d. trajectories into one-step transitions, namely
\begin{equation}
    \mathcal{D}_\pi = \{(S_{i,t},A_{i,t},R_{i,t},S_{i,t+1})\}_{0\leq t \leq T, 1 \leq i \leq n} = \{(S_j,A_j,R_j,\Tilde{S}_{j+1})\}_{j=1}^{N=nT} = \{O_j\}_{j=1}^{N=nT},
\end{equation}
where we use $O = (S,A,R,\Tilde{S})$ to denote observations.

\textbf{Key quantities:} 
Given an \textit{observational} dataset from a behavioral policy $\textcolor{blue_pib}{\mathcal{D}_{\pi_b}} \sim \textcolor{blue_pib}{\pi_b}$, we are then interested in estimating outcomes under a different evaluation policy \textcolor{red_pie}{$\pi_e$}.
The \textbf{target estimand} is the \textit{state-action value function} $Q_{\textcolor{red_pie}{\pi_e}}$ of \textcolor{red_pie}{$\pi_e$}, which is defined as the $\gamma$-discounted expected cumulative reward across trajectories generated according to the policy \textcolor{red_pie}{$\pi_e$}, i.e.,
\begin{equation}
    Q_{\textcolor{red_pie}{\pi_e}}(s,a) \triangleq \expe_{\textcolor{red_pie}{\pi_e}} \bigg[ \sum_{t=0}^{\infty}\gamma^t R_t   \;\bigg|\; S_0 =s,A_0 = a \bigg] .
\end{equation}
See Figure~\ref{fig:setting_and_task} for a visual illustration of the estimation task. We also define a \textit{state value function} $v_{\textcolor{red_pie}{\pi_e}}(s) \triangleq \expe_{A \sim \pi(\cdot|s)}[Q_{\textcolor{red_pie}{\pi_e}}(s,A)]$. We further introduce \textbf{nuisance functions}\footnote{We call nuisance functions all auxiliary functions that are not of primary interest but are required for estimation.} of the cumulative and stationary density ratio via
\begin{equation}
\hspace{-0.2cm} 
    \rho_{l:t} \triangleq \prod_{k=l}^{t}\frac{\textcolor{red_pie}{\pi_e}(A_k = a_k \mid S_k =  s_k)}{\textcolor{blue_pib}{\pi_b}(A_k = a_k \mid S_k =  s_k)},\
    w_{\textcolor{red_pie}{e}/\textcolor{blue_pib}{b}}(s' \mid s,a) \triangleq \frac{\sum_{t=1}^{\infty}p_{\textcolor{red_pie}{e}}(S_t = s' \mid S_0=s,A_0=a)}{p_{\textcolor{blue_pib}{b}}(S = s')},
\end{equation}
respectively. The subscripts $\textcolor{red_pie}{e},\textcolor{blue_pib}{b}$ in $p_{\textcolor{red_pie}{e}},p_{\textcolor{blue_pib}{b}}$ are used to denote densities arising from following an MDP with an evaluation and behavioral policy, respectively. We collect the nuisances in a tuple $\eta = (\rho, w_{\textcolor{red_pie}{e}/\textcolor{blue_pib}{b}})$.

\vspace{-0.2cm}
\subsection{Causal interpretation}
\vspace{-0.2cm}

\textbf{Objective:} Given an \textit{observational} dataset from a behavioral policy $\textcolor{blue_pib}{\mathcal{D}_{\pi_b}} \sim \textcolor{blue_pib}{\pi_b}$, we are then interested in estimating outcomes under a different evaluation policy \textcolor{red_pie}{$\pi_e$}. Since data following \textcolor{red_pie}{$\pi_e$} is not observed, our target is a causal quantity. To formalize this, we use the potential outcomes framework \citep{Neyman.1923,Rubin.1974} and denote the potential reward by $R[a]$, i.e., the reward that \textit{would have been observed had action a been selected}. Then, $R[\textcolor{red_pie}{\pi_e}] \triangleq \sum_{a \in \mathcal{A}}R[a]\textcolor{red_pie}{\pi_e}(a \mid S)$ is the potential reward that would have been observed under the policy \textcolor{red_pie}{$\pi_e$} \citep{Uehara.13122022}. Hence, we are interested in estimating the potential state-action value had policy $\textcolor{red_pie}{\pi_e}$ been followed:
\begin{equation}
    \xi_{\textcolor{red_pie}{\pi_e}}(s,a) \triangleq \expe \bigg[R_0 + \sum_{t=1}^{\infty}\gamma^t R_t[\textcolor{red_pie}{\pi_e}(\cdot \mid S_t)] \;\bigg|\; S_0 = s, A_0 = a \bigg] .
    \label{eq:causal_quantity}
\end{equation}

The causal estimand $\xi_{\textcolor{red_pie}{\pi_e}}(s,a)$ characterizes the \textit{expected individualized potential outcomes} in sequential decision-making (e.g., the patient-specific outcome from a dosage schedule of anti-cancer drugs for a specific patient trajectory). If identification assumptions hold (see Appendix~\ref{app:additional-details}), the \textit{causal estimand} $\xi_{\textcolor{red_pie}{\pi_e}}$ is identified as a \textit{statistical estimand} $Q_{\textcolor{red_pie}{\pi_e}}$ and can thus be estimated from the observational data (i.e. can be expressed as functional of only the observable distribution from following $\pi_b$). Below, we state the identification results in two ways: in Lemma~\ref{thm:identifiability-full}, we take observational data at the level of trajectories, whereas, in Lemma~\ref{thm:identifiability-short}, we take the observational data at the level of one-step transitions. While the first approach is more straightforward, the second allows us to later break the curse of horizon when we develop the \method.

\begin{lemma}[Identification \textbf{\textit{over trajectories}}]\label{thm:identifiability-full}
Under Assumptions (1)--(3) from above, the causal estimand in Eq.~(\ref{eq:causal_quantity}) is identifiable from the observed data of trajectories via
\begin{equation}
    \xi_{\textcolor{red_pie}{\pi_e}}(s,a)
    = Q_{\textcolor{red_pie}{\pi_e}}(s,a) = 
    \expe_{\textcolor{blue_pib}{\pi_b}} \bigg[R_0 + \sum_{t=1}^{\infty}\gamma^t \rho_{1:t} R_t  \;\bigg|\; S_0=s,A_0=a \bigg] .
    \label{eq:identification-long}
\end{equation}
\vspace{-0.7cm}
\end{lemma}
\begin{proof}
    See Appendix~\ref{proof:identifiability-full}.
\end{proof} 
\vspace{-0.2cm}

\begin{lemma}[Identification \textbf{\textit{over one-step transitions}}]\label{thm:identifiability-short}
Under Assumptions (1)--(3), the causal estimand in Eq.~(\ref{eq:causal_quantity}) is identifiable from the observed data of one-step transitions via $
    \xi_{\textcolor{red_pie}{\pi_e}}(s,a)
    = Q_{\textcolor{red_pie}{\pi_e}}(s,a) = f(s,a)
$, where $f$ is the unique solution (unique up to equality almost everywhere) to the Bellman equation for $\textcolor{red_pie}{\pi_e}$, i.e.,
\begin{align}
    f(s,a) = \expe \bigg[ R + \gamma\expe_{\Tilde{A} \sim \textcolor{red_pie}{\pi_e}(\cdot \mid \Tilde{S})}[f(\Tilde{S},\Tilde{A})]  \;\bigg|\; S=s,A=a \bigg] .
    \label{eq:identification-implicit}
\end{align}
\vspace{-0.7cm}

\end{lemma}
\begin{proof}
    See Appendix~\ref{proof:identifiability-short}.
\end{proof} 
\vspace{-0.2cm}

While the derivations of the above identifiability results are straightforward, our aim behind these is to cast the target explicitly as a causal estimand. In the following section, we build on these identification Lemmas and recast existing $Q_{\pi_e}$ estimation algorithms as causal plug-in learners.

\vspace{-0.4cm}
\section{A roadmap to orthogonal learning}
\vspace{-0.2cm}

To derive our method for estimating $Q_{\pi_e}$ from observational data $\mathcal{D}_{\pi_b}$, we proceed in three steps: \circledblue{1}~We first leverage the above identifiability results to construct simple plug-in learners (Section~\ref{sec:plug-in_learners_problems}). We show that these plug-in learners recover existing methods from the literature, namely, $Q$-regression \citep{Liu.29102018} and FQE \citep{LeM.3202019}. However, \textit{plug-in learners have inherent limitations such as so-called plug-in bias} \citep{Kennedy.12032022}. This serves two-fold: to formalize the drawbacks of existing methods theoretically (using the lens of the potential outcomes framework) and to motivate an alternative estimation strategy. \circledblue{2}~We then sketch out the idea behind designing two-stage meta-learners based on Neyman-orthogonal losses (Section~\ref{sec:two-stage_learner}). \circledblue{3}~Finally, we then present our new Neyman-orthogonal meta-learner called \method (Section~\ref{sec:method}). To do so, we leverage semiparametric efficiency theory and derive the efficient influence function. 
We also show that our new meta-learner has several favorable theoretical properties, namely, double robustness, Neyman-orthogonality, and quasi-oracle efficiency. We provide an overview of the different learners in Figure~\ref{fig:learners_pipeline}.


\vspace{-0.3cm}
\subsection{Why plug-in learners are sub-optimal}
\label{sec:plug-in_learners_problems}
\vspace{-0.2cm}

The identification results from above (i.e., Lemma~\ref{thm:identifiability-full} and Lemma~\ref{thm:identifiability-short}) give immediately rise to two na\"ive plug-in estimators. However, as we show later, each comes with inherent limitations.\footnote{For ease of exposition, we adopt the nomenclature for naming different methods based on causal inference literature, but later state the corresponding names of the benchmarks in the literature.}

$\bullet$\,\textbf{IPTW plug-in learner:} A straightforward way to obtain an estimator of $Q_{\pi_e}$ is to take the identification result based on Lemma~\ref{thm:identifiability-full} (i.e., right-hand side of Eq.~(\ref{eq:identification-long})) and ``plug-in'' an estimated cumulative density ratio nuisance $\Hat{\rho}_{1:t}$. This yields
\begin{align}
    \Hat{Q}_{\pi_e}^\text{IPTW}(s,a) = \tfrac{1}{n}\sum_{i=1}^{n}
    \bigg[
    \bigg (R_{i,0} + \sum_{t=1}^{\infty}\gamma^t \hat{\rho}_{i,1:t} R_{i,t} \bigg) 
    \, \ind\{S_{i,0}=s,A_{i,0}=a\} 
    \bigg] ,
\end{align}
which involves the density ratio $\Hat{\rho}_{1:t}$ and thus captures the inverse probability of treatment weighting (IPTW). When we then generalize the estimator from a tabular point-wise solution to learning the best model $\hat{g}$ from a restricted model class $\mathcal{G}$, we yield
\begingroup\makeatletter\def\f@size{9}\check@mathfonts
\begin{align}
    \Hat{Q}_{\pi_e} = \Hat{g} = \argmin_{g \in \mathcal{G}} 
    \tfrac{1}{n}\sum_{i=1}^{n}
    \bigg[
    \sum_{t \geq 0}
    \gamma^t \hat{\rho}_{i,1:t}
    \left(
    Y_{i,t}
    - g(S_{i,t},A_{i,t})
    \right)^2
    \bigg] \;\text{for}\; Y_{i,t} = \sum_{t' \geq t}\gamma^{t'-t}\hat{\rho}_{i,(t+1):t'}R_{i,t'} ,
\end{align}
\endgroup
which corresponds exactly to $Q$-regression \citep{Liu.29102018}.\footnote{To see why this is a generalization of the tabular $\Hat{Q}_{\pi_e}^\text{IPTW}$, consider the case of having a free parameter $\theta$ for each possible point evaluation. The learned minimizer $\hat{g}$ is then nothing else than a point-wise solution to the estimating equation $\nabla_{\theta}\mathcal{L}(\theta) = 0$, which will simply equate $\hat{g}(s,a) = \tfrac{1}{n}\sum_{i=1}^{n}Y_{i,t} \cdot \ind\{S_{i,t}=s,A_{i,t} = a\}$.} A specific limitation of the IPTW plug-in learner (=$Q$-regression) is that it suffers from the curse of horizon as a consequence of using the \textit{cumulative} density ratio nuisance $\Hat{\rho}_{1:t}$.

$\bullet$\,\textbf{Recursive plug-in learner:} An alternative is to use the second identification result from Lemma~\ref{thm:identifiability-short}. Analogous to the technique used in the identification proof (see Appendix~\ref{proof:identifiability-short}), we can recursively obtain an estimator $\hat{Q}_{k+1}$ by ``plugging-in'' into the right-hand side of  Eq.~(\ref{eq:identification-implicit}) the previous estimator $\hat{Q}_{k}$. Formally, we have 
\vspace{-0.2cm}
\begingroup\makeatletter\def\f@size{9}\check@mathfonts
\begin{align}
    \hat{Q}_{\pi_e}^{\mathfrak{R}} = \lim_{k \rightarrow \infty} \hat{Q}_{k}
    \;\;\;\text{for}\;\;\;
    \hat{Q}_{k+1}(s,a) = \tfrac{1}{N}\sum_{i=1}^N
    \left[
    \left(
    R_i + \gamma \expe_{\Tilde{A} \sim \pi_e(\cdot | \Tilde{S}_i)}[\hat{Q}_{k}(\Tilde{S}_i,\Tilde{A})]
    \right)
    \ind\{S_{i}=s,A_{i}=a\} 
    \right] .
\end{align}
\endgroup
This yields an estimated solution to the empirical approximation of Eq.~(\ref{eq:identification-implicit}). Generalizing this approach to a minimization over a model class $\mathcal{G}$, we yield
\begingroup\makeatletter\def\f@size{9}\check@mathfonts
\begin{align}
\hspace{-0.2cm}
    \hat{Q}_{\pi_e} = \hat{g} = \lim_{k \rightarrow \infty} \hat{g}_{k}
    \;\;\;\text{for}\;\;\;
    \hat{g}_{k+1} = \argmin_{g \in \mathcal{G}}
    \tfrac{1}{N}\sum_{i=1}^N
    \left[
    \left(
    R_i + \gamma \expe_{\Tilde{A} \sim \pi_e(\cdot|\Tilde{S}_i)}[\hat{g}_{k}(\Tilde{S}_i,\Tilde{A})]
    -g(S_i,A_i)
    \right)^2
    \right],
\end{align}
\endgroup
which corresponds exactly to the FQE baseline \citep{LeM.3202019}). While the recursive plug-in learner (=FQE) breaks the curse of horizon, its recursive fitting procedure may lead to unpredictable failure modes or even divergence (see the problem of deadly triad in, e.g., \cite{Sutton.2018}).

\begin{wrapfigure}{r}{0.5\textwidth}
    \vspace{-20pt}
    \centering
    \includegraphics[width=0.5\textwidth]{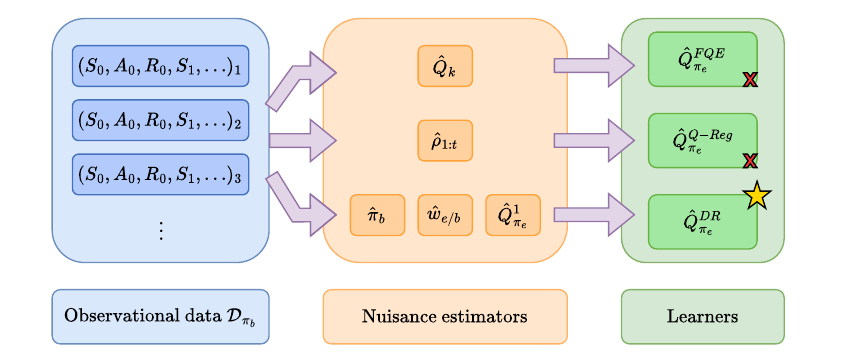}
    \vspace{-20pt}
    \caption{
    \textbf{Comparison.} After observing the data $\mathcal{D}_{\pi_b}$, the learner-specific nuisance functions are estimated first, followed by the actual estimand. \staremoji = our \method. Learners suffering from plug-in bias are marked with \xmark.
    }
    \label{fig:learners_pipeline}
    \vspace{-10pt}
\end{wrapfigure}

$\Rightarrow$ \textit{\textbf{Fundamental problems of plug-in learners:}} Both plug-in learners suffer from so-called \textit{plug-in bias} \citep{Kennedy.12032022}: that is, \textit{errors in the nuisance estimates directly propagate to the causal estimand}. In contrast, we now derive our Neyman-orthogonal meta-learner that \textit{eliminates first-order bias from the nuisance functions}. Hence, bias from nuisance function estimates propagates into the final estimand only via higher-order errors.

\vspace{-0.2cm}
\subsection{Intuition behind two-stage meta-learners}
\label{sec:two-stage_learner}
\vspace{-0.2cm}

To resolve issues from plug-in bias, we later develop a two-staged meta-learner  (see Fig.~\ref{fig:learners_pipeline}). The basic idea is: \circledpurple{1}~In the \textbf{first stage}, the nuisances are estimated, yielding some estimate $\Hat{\eta}$. \circledpurple{2}~In the \textbf{second stage}, the target $g$ with true value $g^*$ is estimated by empirical risk minimization (ERM) over a risk $\mathcal{L}$ via
\begin{align}
    \Hat{g} = \argmin_{g \in \mathcal{G}} \mathcal{L}(\Hat{\eta},g).
\end{align}
Here, one seeks a learner (second-stage loss) with small error despite learning $\hat{g}$ with the estimated nuisance $\Hat{\eta}$ carrying first-stage estimation error. However, deriving such a second-stage loss is non-trivial.

A common feature for the second-stage loss is to employ \textit{Neyman-orthogonal loss functions} \citep{Chernozhukov.2018}, which (in population) satisfy the property 
\begin{align}
    D_{\eta}D_{g}\mathcal{L}(g^*,\eta)[\Hat{g}-g,\Hat{\eta}-\eta] = 0,
\end{align}
where $D_{g}$ and $D_{\eta}$ are directional (Gateaux) derivatives in function space \citep{Foster.1252019}. Informally, orthogonality means the gradient of the loss $D_{g}\mathcal{L}$ (i.e., the estimating function, or also known as the \textit{score}) is insensitive to small perturbations in the nuisances around their oracle value $\eta$, such as those arising from nuisance estimation error.\footnote{For an extended discussion of orthogonal statistical learning, we refer to Appendix~\ref{app:related_theory}.}

\vspace{-0.3cm}
\section{Our \texorpdfstring{\method}{DRQ-learner}}
\label{sec:method}
\vspace{-0.3cm}

We proceed in three steps: \circledgreen{1}~We first derive our Neyman-orthogonal loss. \circledgreen{2}~Next, we show the Quasi-oracle efficiency and double-robustness properties of our loss. \circledgreen{3}~Finally, we elaborate on the practical implementation.
\vspace{-0.2cm}
\subsection{Theoretical results}
\label{sec:theoretical_results}
\vspace{-0.2cm}

We denote our Neyman-orthogonal loss by $L^{3}_{\pi_e}(\eta,g)$, which we formally derive in Theorem~\ref{thm:eif_plus_ney-ortho}. Therein, we derive and employ the \textit{efficient influence function} (EIF). With the perspective of classical semiparametric inference, we replace the ERM \textit{estimate} of the population risk with a debiased estimator based on the EIF. Under standard regularity conditions, the resulting population analogue is Neyman-orthogonal. Hence, by deriving the EIF of a standard MSE population risk, we obtain our main result, namely, the Neyman-orthogonal loss $L^{3}_{\pi_e}(\eta,g)$.

\begin{theorem}[\textbf{\textit{Neyman-orthogonality}}]\label{thm:eif_plus_ney-ortho}
    \begingroup\makeatletter\def\f@size{8}\check@mathfonts
    The loss 
    \begin{align}
    &L^{3}_{\pi_e}(\eta,g) = \expe_{O' \sim p_b}
    \Bigg[\sum_{a}\pi_e(a \mid S')\left(\phi_1 -g(S',a) \right)^2 \Bigg]
    +
    \expe_{O' \sim p_b, s \sim p_b(s)}
    \Bigg[ \sum_{a}\pi_e(a \mid s) \left(\phi_2 - g(s,a)\right)^2 \Bigg]
    \end{align}
    \endgroup
    where
    \begingroup\makeatletter\def\f@size{8}\check@mathfonts
    \begin{align}
    &\phi_1 = 2\frac{\ind(A'=a)}{\pi_b(A' \mid S')}\left\{R' + \gamma v_{\pi_e}(\Tilde{S}') - Q_{\pi_e}(S',A') \right\} + Q_{\pi_e}(S',a) , \\
    &\phi_2 = 2\frac{\pi_e(A' \mid S')}{\pi_b(A' \mid S')}
    w_{e/b}(S' \mid s,a) 
    \left\{R' + \gamma v_{\pi_e}(\Tilde{S}') - Q_{\pi_e}(S',A') \right\}
    + Q_{\pi_e}(s,a)
    \end{align}
    \endgroup
is Neyman-orthogonal w.r.t. all the nuisance functions $\eta = (\pi_b, w_{e/b},Q_{\pi_e})$. For intuition on the form of the pseudo-outcomes $\phi_1,\phi_2$, we refer to Appendix~\ref{app:intuition_pseudooutcomes}.
\end{theorem}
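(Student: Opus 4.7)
The plan is to verify Neyman-orthogonality directly by computing the mixed directional derivative $D_\eta D_g L^{3}_{\pi_e}(g^\ast,\eta^\ast)[h,\Delta\eta]$ for an arbitrary $g$-direction $h$ and an arbitrary perturbation $\Delta\eta$ of each of the three nuisance components, and showing it vanishes. The population truth is $g^\ast = Q_{\pi_e}$ by Theorem~\ref{thm:identifiability-short}, and the analytic backbone is the Bellman residual $\epsilon := R' + \gamma v_{\pi_e}(\tilde{S}') - Q_{\pi_e}(S',A')$, which at $\eta^\ast$ satisfies $\expe[\epsilon \mid S',A']=0$ as a consequence of Eq.~(\ref{eq:identification-implicit}).

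Since $L^{3}_{\pi_e}$ is a sum of two risks quadratic in $g$, the Gateaux derivative in $g$ is
\begin{align*}
D_g L^{3}_{\pi_e}(g,\eta)[h]
&= -2\,\expe_{O'}\!\Bigl[\textstyle\sum_{a}\pi_e(a\mid S')(\phi_1 - g(S',a))\,h(S',a)\Bigr]\\
&\quad -2\,\expe_{O',s}\!\Bigl[\textstyle\sum_{a}\pi_e(a\mid s)(\phi_2 - g(s,a))\,h(s,a)\Bigr].
\end{align*}
At $(g^\ast,\eta^\ast)$, each residual $\phi_i - g^\ast$ collapses to a multiple of $\epsilon$, so iterating expectations over $(S',A')$ shows the score vanishes. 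I would then differentiate the score once more in each nuisance direction.

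For $\Delta\pi_b$ and $\Delta w_{e/b}$, orthogonality is immediate: these nuisances enter $\phi_1,\phi_2$ only as multiplicative weights attached to $\epsilon$, so their Gateaux derivatives leave an integrand of the form $(\text{deterministic weight})\cdot\epsilon$ whose conditional mean given $(S',A')$ is zero. The only care needed is the chain rule $\partial_{\pi_b}(1/\pi_b) = -\Delta\pi_b/\pi_b^2$, which does not disrupt the zero-conditional-expectation structure.

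The substantive case is $\Delta Q_{\pi_e}$, because $Q_{\pi_e}$ enters each $\phi_i$ in three places: (a)~the recentering term $Q_{\pi_e}(S',a)$ (resp.\ $Q_{\pi_e}(s,a)$), (b)~$Q_{\pi_e}(S',A')$ inside $\epsilon$, and (c)~$v_{\pi_e}(\tilde S') = \expe_{\tilde A \sim \pi_e}[Q_{\pi_e}(\tilde S',\tilde A)]$ inside $\epsilon$. Differentiating in direction $\Delta Q$ produces three contributions per summand, and I would show the six resulting pieces cancel in aggregate. The (a) contributions of the two summands cancel each other directly, because $s\sim p_b$ shares the law of $S'$. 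Each remaining piece reduces, after integrating out $A'$ via $\expe_{A'\mid S'}[\pi_e(A'\mid S')/\pi_b(A'\mid S')\,f(A')] = \sum_{a'}\pi_e(a'\mid S')\,f(a')$ and $\tilde S'$ via the one-step transition $p_s(\cdot\mid S',A')$, to an expectation of $\Delta v$ evaluated along $\pi_e$-trajectories. The defining telescoping identity of the stationary visitation ratio $w_{e/b}$ (which rolls up one-step-ahead visitations into a discounted occupancy) then makes the remaining geometric-in-$\gamma$ sums cancel across the two summands. The main obstacle is precisely this last telescoping step: it is where the joint design of $\phi_1$, $\phi_2$, and $w_{e/b}$ is essential, and where the argument is most delicate to book-keep.
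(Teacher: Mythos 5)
Your strategy is the same as the paper's (Appendix~\ref{appx:ney-ortho_proof}): the $\pi_b$- and $w_{e/b}$-directions vanish because those nuisances enter only as weights multiplying the Bellman residual, whose conditional mean given $(S',A')$ is zero (the paper isolates this as Lemma~\ref{lemma:tderrorzero}), and the $Q_{\pi_e}$-direction is handled by reducing the surviving terms to the Bellman-flow/telescoping identity for the discounted occupancy encoded in $w_{e/b}$ --- exactly the paper's final step via $\beta_e(S'\mid s,a)=\tfrac{1-\gamma}{\gamma}\sum_{t\ge 1}\gamma^t p_e(S_t=S'\mid S_0=s,A_0=a)$. One bookkeeping detail in your sketch is off: the two recentering contributions $\Delta Q(S',a)$ and $\Delta Q(s,a)$ enter with the \emph{same} sign (both arise from the $+Q_{\pi_e}$ term inside $\phi_1$, $\phi_2$) and therefore do not cancel each other; instead, the first summand's recentering term cancels jointly with its own $-\Delta Q(S',A')$ piece (after integrating out $A'\sim\pi_b(\cdot\mid S')$ against the weight $2\,\ind(A'=a)/\pi_b(A'\mid S')$) \emph{and} with the second summand's recentering term, leaving only the one-step-ahead $\gamma\,\Delta v$ pieces to be killed by the occupancy identity. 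This regrouping changes nothing substantive, so your plan goes through.
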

\begin{proof}
We refer the reader to Appendix~\ref{appx:derivation_loss} for formal proof. Here, to provide intuition, we show the efficient influence function of the standard MSE loss, $L^{1}_{\pi_e}(\eta,g)$, which is shown to be
\begingroup\makeatletter\def\f@size{8}\check@mathfonts
\begin{align}
    & \eif(L^{1}_{\pi_e}(\eta,g), O') \nonumber \\
    =& \sum_{a} \pi_e(a|S') \left(Q_{\pi_e}(S',a) - g(S',a) \right)^2 - L^{1}_{\pi_e}(\eta,g) 
+ 2\left\{R' + \gamma v_{\pi_e}(\Tilde{S}') - Q_{\pi_e}(S',A') \right\}\frac{\pi_e(A'|S')}{\pi_b(A'|S')} \\
&\times
\Bigg[
Q_{\pi_e}(S',A')-g(S',A')
+ \expe_{s,a \sim p_b(s)\pi_e(a|s)}\left[(Q_{\pi_e}(s,a)-g(s,a))
w_{e/b}(S'|s,a)
\right].
\Bigg]
\nonumber
\end{align}
\endgroup
Afterward, we derive the loss $L^{3}_{\pi_e}(\eta,g)$ with the debiasing procedure (and some algebraic manipulations). Neyman-orthogonality is then proved by taking the necessary derivatives. A formal and detailed derivation is in Appendix~\ref{appx:derivation_loss}.
\end{proof}

Theorem~\ref*{thm:eif_plus_ney-ortho} shows that our loss, $L^{3}_{\pi_e}(\eta,g)$, for estimating $Q_{\pi_e}$ is Neyman-orthogonal and, therefore, robust to nuisance estimation error. Finally, we prove our loss is quasi-oracle efficient and doubly robust.

\begin{theorem}[\textbf{\textit{Quasi-oracle efficiency}}]\label{thm:quasi-oracle}
    Under standard assumptions (see \citet{Foster.1252019}), $L^3_{\pi_e}(\eta,g)$ achieves quasi-oracle efficiency, specifically, for $\Hat{g} = \argmin_{g \in \mathcal{G}} L^3_{\pi_e}(\Hat{\eta},g)$
    \begin{align}
    \label{eq:quasi-oracle_bound}
        \lVert g^* - \Hat{g} \rVert^2_{2,p_b\pi_e}
        \lesssim
    \lVert \Delta^2\Hat{\pi}_b\rVert^2_2 \lVert\Delta^2\Hat{Q}_{\pi_e} \rVert^2_2
    + \lVert \Delta^2\Hat{w}_{e/b}\rVert^2_2 \lVert\Delta^2\Hat{Q}_{\pi_e} \rVert^2_2,
    \end{align}
where $x\lesssim y$ is taken to mean there exists a constant $C>0$ such that $x \leq Cy$, the $\Delta k$ operator is defined as $\Hat{k} - k^*$ for any function $k$, and $g^* = \argmin_{g \in \mathcal{G}} L^3_{\pi_e}(\eta,g)$, which equals the true $Q_{\pi_e}$ provided the function class $\mathcal{G}$ is expressive enough to include it. Lastly, the norm weighting $p_b\pi_e$ in $\lVert \cdot\rVert^2_{2,p_b\pi_e}$ mirrors that of the loss.
\end{theorem}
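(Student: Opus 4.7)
The plan is to follow the now-standard Foster--Syrgkanis recipe for orthogonal statistical learning. Since Theorem~\ref{thm:eif_plus_ney-ortho} already hands me Neyman-orthogonality of $L^3_{\pi_e}$ with respect to every nuisance in $\eta = (\pi_b, w_{e/b}, Q_{\pi_e})$, the remaining work is to convert this population-level property into an $L_2(p_b\pi_e)$ bound on $g^* - \hat{g}$. The product-of-errors structure on the right-hand side of~\eqref{eq:quasi-oracle_bound} is the signature of the mixed second-order Gateaux derivatives of the loss, so the entire argument reduces to (i) killing the first-order nuisance sensitivity via orthogonality, (ii) expressing the residual second-order contribution as a sum of bilinear forms in the nuisance errors, and (iii) exploiting strong convexity of $g \mapsto L^3_{\pi_e}(\eta^*, g)$.

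In detail, I would first introduce the excess risk $R(\hat{g}) = L^3_{\pi_e}(\eta^*,\hat{g}) - L^3_{\pi_e}(\eta^*, g^*)$ and bound it above by the ``nuisance gap'' $[L^3_{\pi_e}(\eta^*,\hat{g}) - L^3_{\pi_e}(\hat{\eta},\hat{g})] - [L^3_{\pi_e}(\eta^*,g^*) - L^3_{\pi_e}(\hat{\eta}, g^*)]$, using that $\hat{g}$ minimizes $L^3_{\pi_e}(\hat{\eta}, \cdot)$. Expanding each bracket as a second-order Taylor series in $\eta$ around $\eta^*$, the zeroth-order pieces cancel, the first-order piece of the $g^*$ bracket is annihilated by Neyman-orthogonality at $(\eta^*, g^*)$, and the residual first-order piece of the $\hat{g}$ bracket combines with its counterpart to produce the mixed second derivative $D_\eta D_g L^3_{\pi_e}(\eta^*, g^*)[\hat{\eta}-\eta^*, \hat{g}-g^*]$ plus a higher-order remainder. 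Next, I would compute these mixed derivatives explicitly from the form of $\phi_1$ and $\phi_2$; the structure of $L^3_{\pi_e}$ forces perturbations in $\hat{Q}_{\pi_e}$ to couple only with perturbations in $\hat{\pi}_b$ or $\hat{w}_{e/b}$ through the Bellman-residual factor $\{R + \gamma v_{\pi_e}(\tilde{S}) - Q_{\pi_e}(S,A)\}$. Cauchy--Schwarz applied termwise produces the products $\lVert \Delta\hat{\pi}_b\rVert_2 \lVert \Delta\hat{Q}_{\pi_e}\rVert_2$ and $\lVert \Delta\hat{w}_{e/b}\rVert_2 \lVert \Delta\hat{Q}_{\pi_e}\rVert_2$; finally, strong convexity of the quadratic-in-$g$ population loss, $R(\hat{g}) \gtrsim \lVert g^* - \hat{g}\rVert^2_{2, p_b\pi_e}$, inverts to deliver the stated bound.

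The main obstacle I anticipate is the bookkeeping in computing the mixed derivatives: $L^3_{\pi_e}$ depends on three nuisances that enter nonlinearly through conditional expectations, a density ratio, and the target $Q_{\pi_e}$ itself, so I must verify that every pure second derivative $D^2_{\pi_b}, D^2_{w_{e/b}}, D^2_{Q_{\pi_e}}$ either vanishes at $(\eta^*, g^*)$ or is absorbed into the higher-order remainder, and that the cross term in $\Delta\hat{\pi}_b$ and $\Delta\hat{w}_{e/b}$---notably absent from~\eqref{eq:quasi-oracle_bound}---does the same. The higher-order Taylor remainder will additionally require standard boundedness/integrability conditions on the nuisance errors and their directional derivatives, which I expect are exactly what the ``standard assumptions'' of the theorem quietly package together.
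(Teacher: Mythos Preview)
Your proposal is correct and follows essentially the same Foster--Syrgkanis recipe as the paper; the only difference is that the paper, rather than using the nuisance-gap decomposition plus a separate strong-convexity step, exploits the exact quadratic-in-$g$ structure to write the identity $2\lVert g^*-\hat g\rVert^2_{2,p_b\pi_e} = [L^3_{\pi_e}(\hat\eta,\hat g) - L^3_{\pi_e}(\hat\eta,g^*)] - D_g L^3_{\pi_e}(\hat\eta,g^*)[\Delta\hat g]$ and then Taylor-expands only the \emph{score} $D_g L^3_{\pi_e}(\hat\eta,g^*)$ in $\eta$, which saves one layer of expansion and yields the curvature constant exactly. Your caution about the pure-$\pi_b$ and $\pi_b$--$w_{e/b}$ second-derivative blocks is well placed: the paper's appendix computes the Hessians $\nabla_{\eta\eta}\phi_1,\nabla_{\eta\eta}\phi_2$ and finds those entries nonzero, and its derived bound accordingly carries extra terms $\lVert\Delta\hat\pi_b\rVert^4$ and $\lVert\Delta\hat\pi_b\rVert^2\lVert\Delta\hat w_{e/b}\rVert^2$ beyond the two products stated in the theorem, so you should expect to carry them through rather than find that they vanish.
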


\begin{corollary}[\textbf{\textit{Double robustness}}] 
The learned approximation $\hat{g}$ is doubly robust. Specifically, if either $\Delta\hat{Q}_{\pi_e} \rightarrow 0$ or $\Delta\hat{\pi}_{b} \rightarrow \Delta\hat{w}_{e/b} \rightarrow 0$, then $\hat{g}$ is a consistent estimator of $g^*$, i.e., asymptotically $\lVert g^* - \Hat{g} \rVert^2_{2,p_b\pi_e} = 0$. 
\end{corollary}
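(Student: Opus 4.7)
The plan is to deduce the corollary directly from the quasi-oracle bound established in Theorem~\ref{thm:quasi-oracle}. Inspecting Eq.~(\ref{eq:quasi-oracle_bound}), one notices that both summands on the right-hand side share the common factor $\lVert\Delta\hat{Q}_{\pi_e}\rVert_2^2$ multiplying either $\lVert\Delta\hat{\pi}_b\rVert_2^2$ or $\lVert\Delta\hat{w}_{e/b}\rVert_2^2$. The bound thus exhibits the classic ``product-of-errors'' structure that underlies double robustness: the leading-order error in $\hat{g}$ vanishes as soon as one of the two factors of each summand vanishes.

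First, I would factor the right-hand side of Eq.~(\ref{eq:quasi-oracle_bound}) as
\begin{equation*}
\lVert g^* - \hat{g} \rVert_{2,p_b\pi_e}^2
\;\lesssim\;
\lVert\Delta\hat{Q}_{\pi_e}\rVert_2^2 \cdot \bigl(\lVert\Delta\hat{\pi}_b\rVert_2^2 + \lVert\Delta\hat{w}_{e/b}\rVert_2^2\bigr).
\end{equation*}
Case~1 ($\Delta\hat{Q}_{\pi_e} \to 0$): the factor $\lVert\Delta\hat{Q}_{\pi_e}\rVert_2^2$ vanishes asymptotically, so the entire right-hand side tends to zero regardless of the behaviour of $\hat{\pi}_b$ and $\hat{w}_{e/b}$, provided the remaining nuisance errors stay bounded (this boundedness is already implicit in the ``standard assumptions'' invoked by Theorem~\ref{thm:quasi-oracle}). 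Case~2 ($\Delta\hat{\pi}_b \to 0$ and $\Delta\hat{w}_{e/b} \to 0$): the second factor vanishes, so again the right-hand side tends to zero provided $\lVert\Delta\hat{Q}_{\pi_e}\rVert_2^2$ remains bounded. In either case, $\lVert g^* - \hat{g} \rVert_{2,p_b\pi_e}^2 \to 0$ asymptotically, which is the claimed consistency.

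The only substantive step beyond unpacking Theorem~\ref{thm:quasi-oracle} is verifying the boundedness of the non-vanishing nuisance error, so that $0 \cdot (\text{bounded}) = 0$ rather than an indeterminate form. I would state this explicitly as part of the ``standard assumptions'' already used in Theorem~\ref{thm:quasi-oracle} (typical conditions such as bounded density ratios, bounded reward, and a uniformly bounded hypothesis class $\mathcal{G}$). No fundamentally new argument is required.

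Because the corollary reduces to an algebraic observation about the product structure in Eq.~(\ref{eq:quasi-oracle_bound}), there is no real obstacle: the heavy lifting was done in establishing the quasi-oracle bound itself. If anything, the only delicate point is book-keeping of the implicit boundedness conditions so that the two vanishing cases genuinely imply consistency; this can be handled in a single sentence referring back to the regularity assumptions of Theorem~\ref{thm:quasi-oracle}.
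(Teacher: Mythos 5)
Your proposal matches the paper's own argument, which likewise dispatches the corollary in one line by plugging the two hypotheses into the product-structured bound of Eq.~(\ref{eq:quasi-oracle_bound}); your explicit remark that the non-vanishing nuisance error must stay bounded is a sensible (and correct) piece of book-keeping that the paper leaves implicit.
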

\vspace{-0.3cm}
\begin{proof}
    See Appendix~\ref{appx:quasi-oracle-proof} for the proofs of both the theorem and the corollary.
\end{proof}
\vspace{-0.3cm}

The bound in Eq.~(\ref{eq:quasi-oracle_bound}) shows that the excess risk of $\hat{g}$ depends only on \textit{products} of nuisance estimation errors. This means that even if one nuisance component (e.g. $\hat{Q}_{\pi_e}^{1}$) converges slowly, the overall estimator still converges at the fast rate of the better-estimated component. In other words, $\hat{g}$ behaves \textit{as if oracles nuisances were used}, up to higher-order terms \citep[cf.][]{Foster.1252019,Nie.13122020}. The estimation error is thus shielded from first-order nuisance misspecification and is only impacted through second-order interactions.


\begin{wrapfigure}{r}{0.42\textwidth}
\vspace{-0.9cm}
\begin{minipage}{0.42\textwidth}
\begin{algorithm}[H]
\caption{Our \method for $Q_{\pi_e}$}\label{alg:pseudocode}
\begin{spacing}{1.4}
\tiny
\textbf{Input:} Observed dataset $\mathcal{D}_{\pi_b}$, class $\mathcal{G}$ \\
\textbf{Output:} Doubly Robust estimator $\Hat{Q}_{\pi_e}^\text{DR}$
\begin{algorithmic}[1]
\tiny
\State \texttt{// First stage (nuisance estimation)}
\State $\Hat{\pi_b}(a,s) \leftarrow \Hat{\mathbb{P}}_{b}(A=a \mid S=s)$
\State $\Hat{w}_{e/b}(s',s,a) \leftarrow \frac{\sum_{t=1}^{\infty}\Hat{\mathbb{P}}_{e}(S_t = s' \mid S_0=s,A_0=a)}{\Hat{\mathbb{P}}_b(s')}$
\State $\Hat{Q}^{1}_{\pi_e} \leftarrow \Hat{\expe}_{\pi}\left[\sum_{t=0}^{\infty}\gamma^t R_t   \middle| S_0 =s,A_0 = a \right]  $
\State \texttt{// Second stage (DR adjustment)}
\State $\Hat{Q}^\text{DR}_{\pi_e} = \argmin_{g \in \mathcal{G}} \Hat{L}_{\pi_e}^{3}((\Hat{\pi_b},\Hat{w}_{e/b},\Hat{Q}^{1}_{\pi_e}),g)$
\vspace{0.1cm}
\State \textbf{Return:} $\Hat{Q}^\text{DR}_{\pi_e}$
\end{algorithmic}
\end{spacing}
\end{algorithm}
\end{minipage}
\vspace{-0.5cm}
\end{wrapfigure}


\textit{Remark:} Our above theory is different from \citet{Shi.5102021} in the following ways: For the purpose of obtaining a tight confidence interval for OPE, \citet{Shi.5102021} have derived a point-wise iterative debiasing procedure for (in their view nuisance) $Q_{\pi_e}$ that, \textit{when restricted to the discrete state setting with no model class $\mathcal{G}$ restrictions}, corresponds to our learner. We provide a more general solution that (i)~is applicable to both continuous\footnote{Note that the approach of \citet{Shi.5102021} cannot readily be extended to continuous settings since their point-wise debiasing step includes a Dirac delta function on the state. In a continuous setting, this is either zero or infinite, and thus not directly applicable.}  and discrete state spaces, (ii)~able to fit an estimator $\hat{g} \in \mathcal{G}$, and (iii)~ provides the theory necessary to show Neyman-orthogonality and quasi-oracle efficiency. Additionally, our derivation of the efficient influence function means that, for the discrete setting, we show that the estimator is efficient\footnote{Meaning it achieves the semiparametric efficiency bound on asymptotic variance dictated by the EIF.}.


\vspace{-0.2cm}
\subsection{Implementation}\label{sec:implementation}

\textbf{Pseudocode:} The pseudocode for our \method is in Algorithm~\ref{alg:pseudocode}. (1)~The first stage simply estimates the nuisance functions, namely, $\hat{\eta} = (\hat{\pi}_b, \hat{w}_{e/b},\hat{Q}_{\pi_e}^{1})$. Notably, the nuisances include the target itself $Q_{\pi_e}$. (2)~The aim of the second stage estimation is to refine the first stage estimate $\hat{Q}_{\pi_e}^{1}$ with a loss designed to bring favorable theoretical properties to the second stage refinement. Put differently, \textit{we construct a meta-learner that in the first stage accepts any off-policy $Q_{\pi_e}$ estimation method and subsequently refines it}. Furthermore, we may choose to restrict the space of solutions $\mathcal{G}$ of the second stage and obtain the best projection of true $g^* \notin \mathcal{G}$ onto $\mathcal{G}$,  for example, if we wish to obtain an interpretable solution.

\textbf{Implementation:} Our \method is generally flexible and can be implemented with \textit{arbitrary machine learning models} for estimating the nuisance functions as well as the second-stage. We provide details about the architectures and fitting process we use in our experiments in Appendix~\ref{appx:implementation_details}.

\vspace{-0.4cm}
\section{Experiments}\label{sec:experiments}
\vspace{-0.3cm}

The primary goal of our experiments is not traditional benchmarking but rather to validate our theoretical results: \circledred{1}~that \textit{our \method outperforms the plug-in learners}; \circledred{2}~that our \method is especially \textit{effective in settings that benefit from Neyman-orthogonality} such as settings with low overlap; and \circledred{3}~that \textit{our theory is applicable to different function classes} including restricted model classes $\mathcal{G}$.

\textbf{Settings:}
We consider the Taxi and Frozen Lake environments from the OpenAI Gym package \citep{Brockman.652016}. We set our data-generating policy $\pi_b$ and our target evaluation policy $\pi_e$ as epsilon-greedy policies, \mbox{$\pi_{i} \leftarrow \varepsilon$-greedy$(Q^*,\varepsilon_{i})$} for $i \in \{e,b\}$ and for the optimal $Q^*$, which we acquire in an online fashion. We generate a dataset $\mathcal{D}_{\pi_b}$ of $n$ trajectories following $\pi_b$. We consider two settings: \textbf{(A)}~when the model class $\mathcal{G}$ is left unrestricted, and \textbf{(B)}~when the model class $\mathcal{G}$ is restricted to a simple linear model. For each setting, we conduct three sets of experiments: (1)~We consider a varying dataset size $n \in [2000, \ldots, 6000]$. (2)~By varying the discount factor $\gamma$, we alter the length of the horizon considered. Here, we vary the effective horizon\footnote{
\textit{Intuition:} Since $\sum_{t=0}^{\infty}\gamma^t = \tfrac{1}{1-\gamma}$, state-action values will have a magnitude of $\tfrac{1}{1-\gamma}$ times that of rewards. Instead of thinking of discounted rewards across an unbounded trajectory, we consider effectively taking a horizon of $h$ steps with undiscounted rewards. 
}
$h \triangleq \tfrac{1}{1-\gamma}$ in the range $h \in [3,\ldots,20]$, or in other words, $\gamma \in [0.66,\ldots,0.95]$. (3)~By varying the greediness $\varepsilon_e \in [0.1, \ldots,0.9]$ of the target evaluation policy, while holding $\varepsilon_b$ fixed, we can directly vary the degree of overlap between the dataset and the off-policy potential distribution whose $Q$-function we seek to estimate. We use a simple metric $\mathrm{Overlap} = \sum_{a}\min(\pi_b(a),\pi_e(a))$ to quantify the level of overlap.


\textbf{Metric:} We evaluate the performance of all methods using 
$\mathrm{rMSE}(\hat{Q}, Q_{\pi_e})
= \frac{\|\hat{Q} - Q_{\pi_e}\|_2^{2}}{\|Q_{\pi_e}\|_2^{2}}$. We report the mean ($\pm$ 1 standard error) over 5 runs with different seeds.

\textbf{Baselines:} As baselines, we implement standard $Q_{\pi_e}$ estimation methods of $Q$-regression \citep{Liu.29102018} and FQE \citep{LeM.3202019}. We have previously shown that these correspond to plug-in methods and should thus be inferior. Additionally, we implement Minimax $Q$-learning (MQL) \citep{Uehara.28102019}. For implementation details, see Appendix~\ref{appx:implementation_details}


\newcommand{\subcapt}[1]{{\scriptsize #1}}

\begin{figure}[ht]
  \centering
  \begin{minipage}[t]{0.32\linewidth}
    \centering
    \subcapt{\textbf{A1:} varying dataset size}\\[0.4cm] 
    \includegraphics[width=\linewidth]{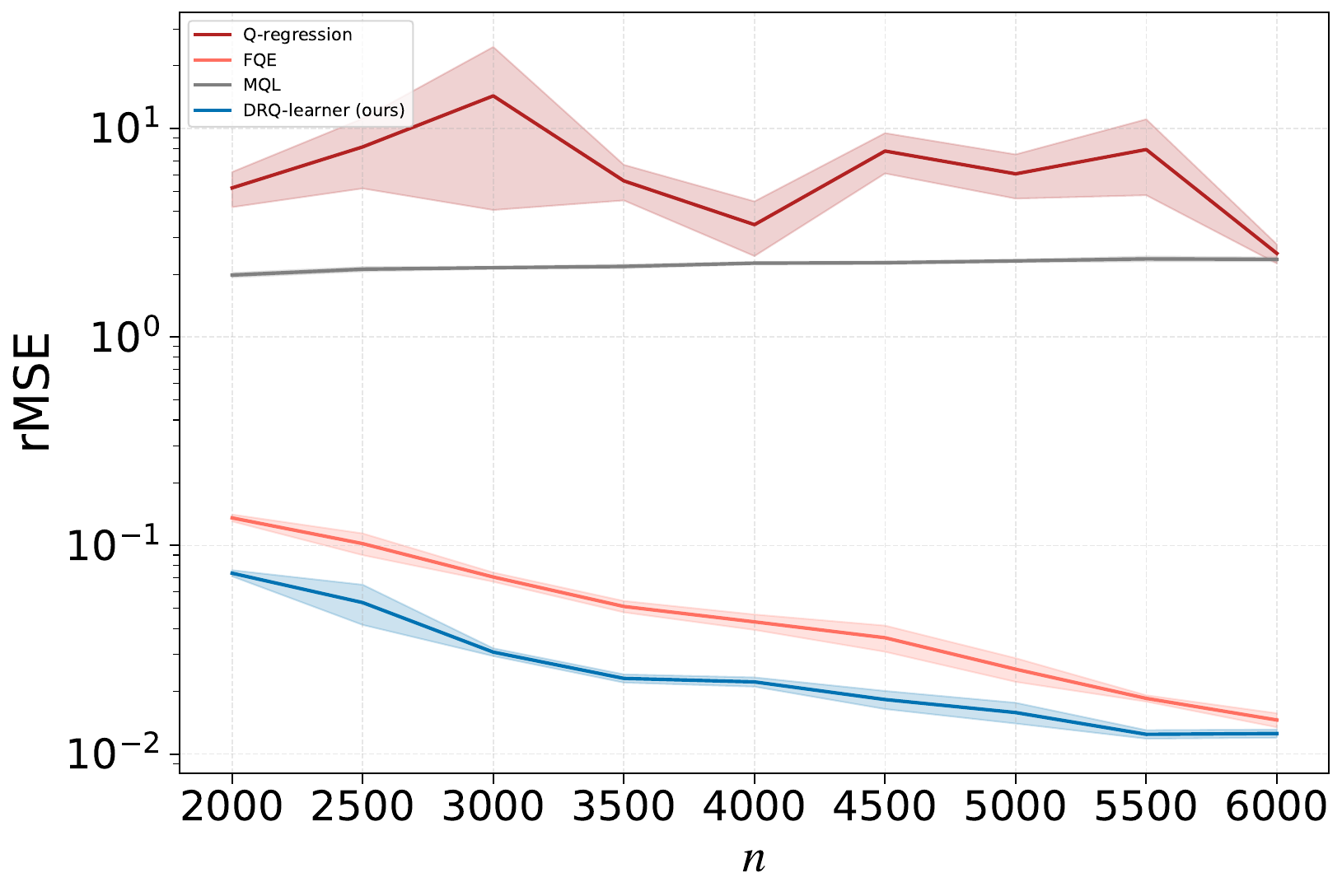}
    \label{fig:Taxi_n}
  \end{minipage}\hfill
  \begin{minipage}[t]{0.32\linewidth}
    \centering
    \subcapt{\textbf{A2:} varying length of horizon}\\[0.4cm] 
    \includegraphics[width=\linewidth]{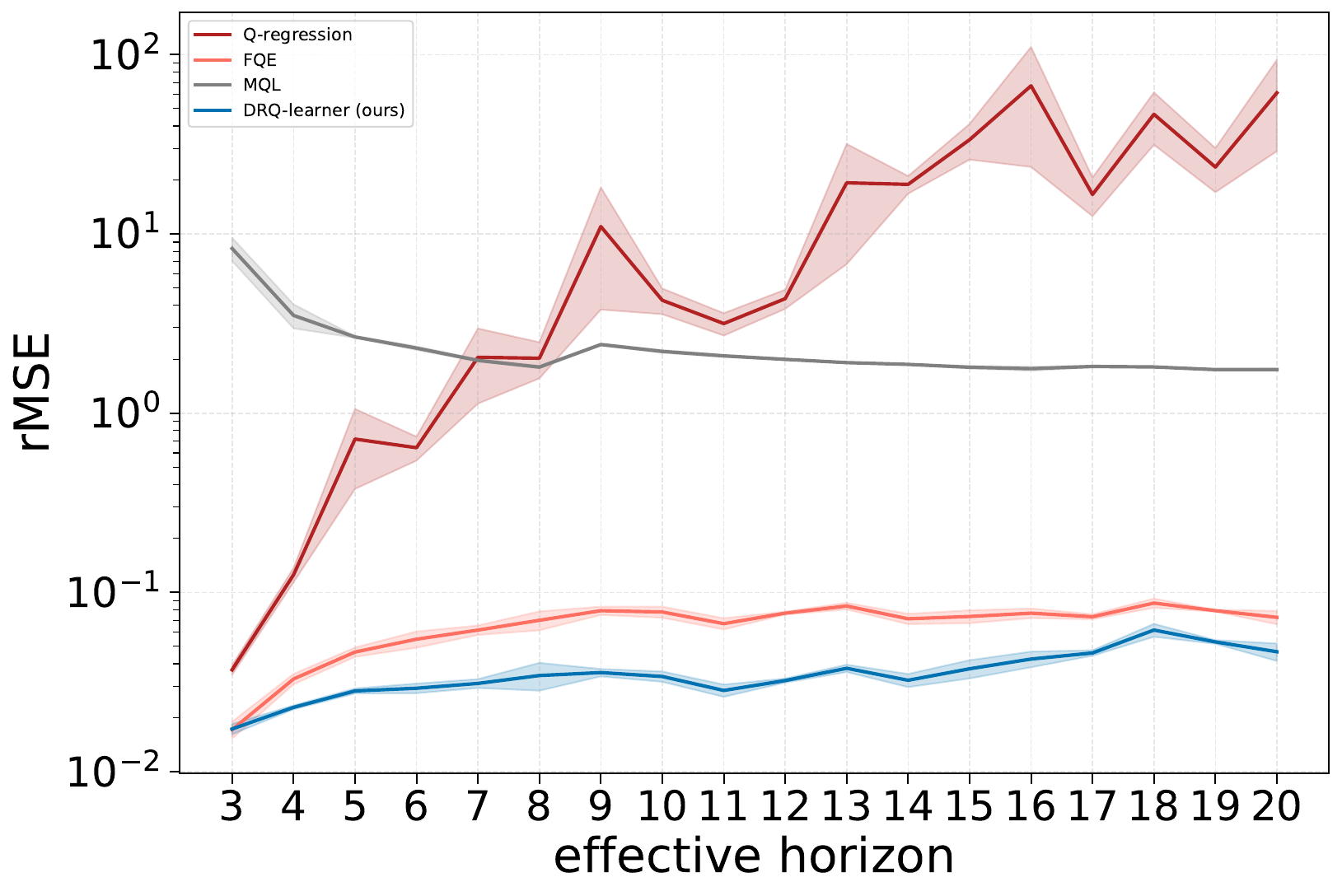}
    \label{fig:Taxi_gamma}
  \end{minipage}\hfill
  \begin{minipage}[t]{0.32\linewidth}
    \centering
    \subcapt{\textbf{A3:} varying overlap} 
    \includegraphics[width=\linewidth]{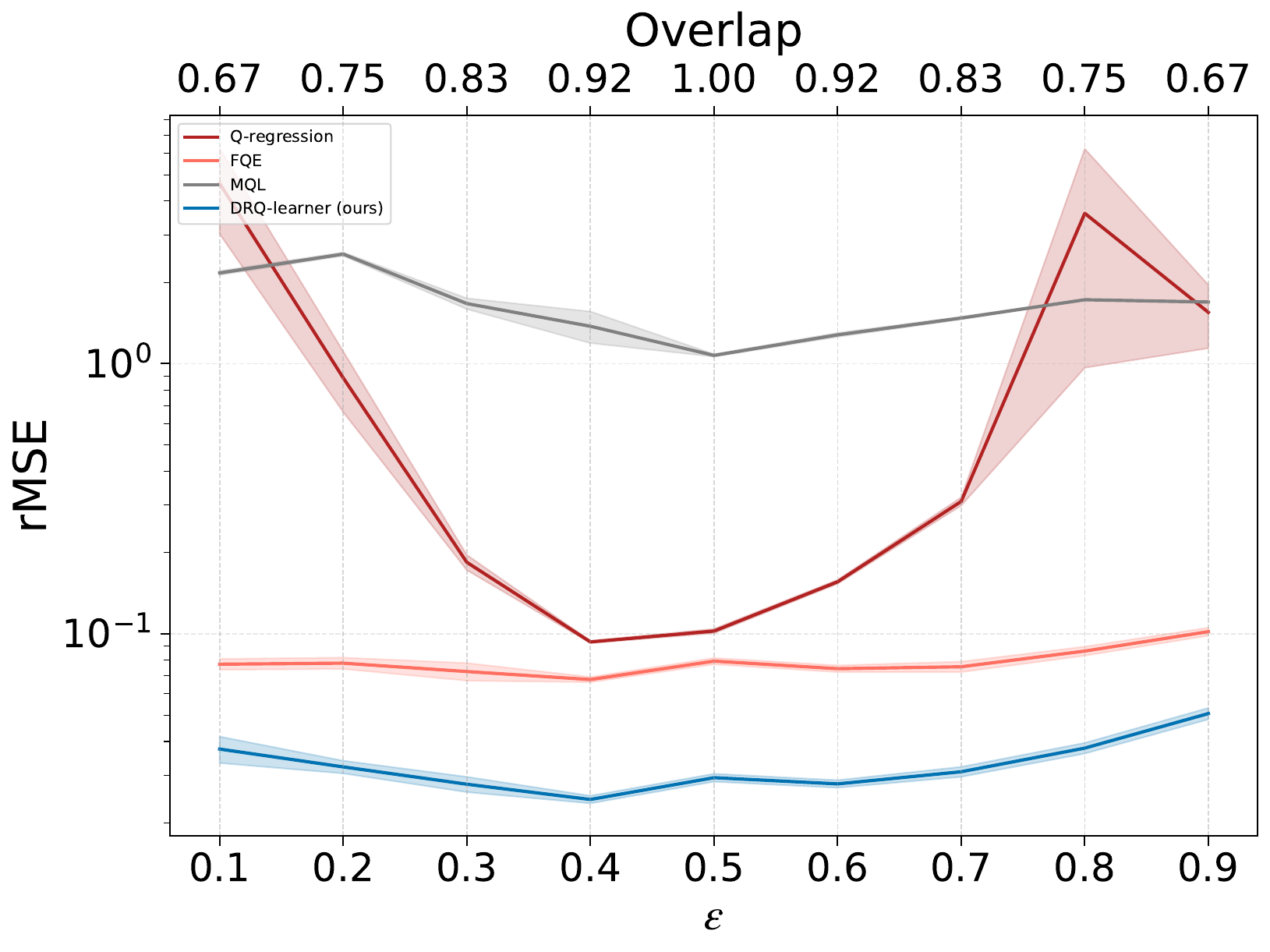}
    \label{fig:Taxi_epsilon}
  \end{minipage}
  \vspace{-.7cm}
  \caption{\textbf{Setting A -- Taxi environment:} Unrestricted model class $\mathcal{G}$. The results \textit{confirm the theoretical properties}: our \method in \textcolor{NavyBlue}{blue} is better than the plug-in learners in \textcolor{BrickRed}{red}/\textcolor{orange}{orange}, robust for varying lengths of the horizon, and is especially effective for settings with low overlap.}
  \label{fig:Taxi_all}
\end{figure}



\begin{figure}[t]
  \centering
  \begin{minipage}[t]{0.32\linewidth}
    \centering
    \subcapt{\textbf{B1:} varying dataset size}\\[0.4cm] 
    \includegraphics[width=\linewidth]{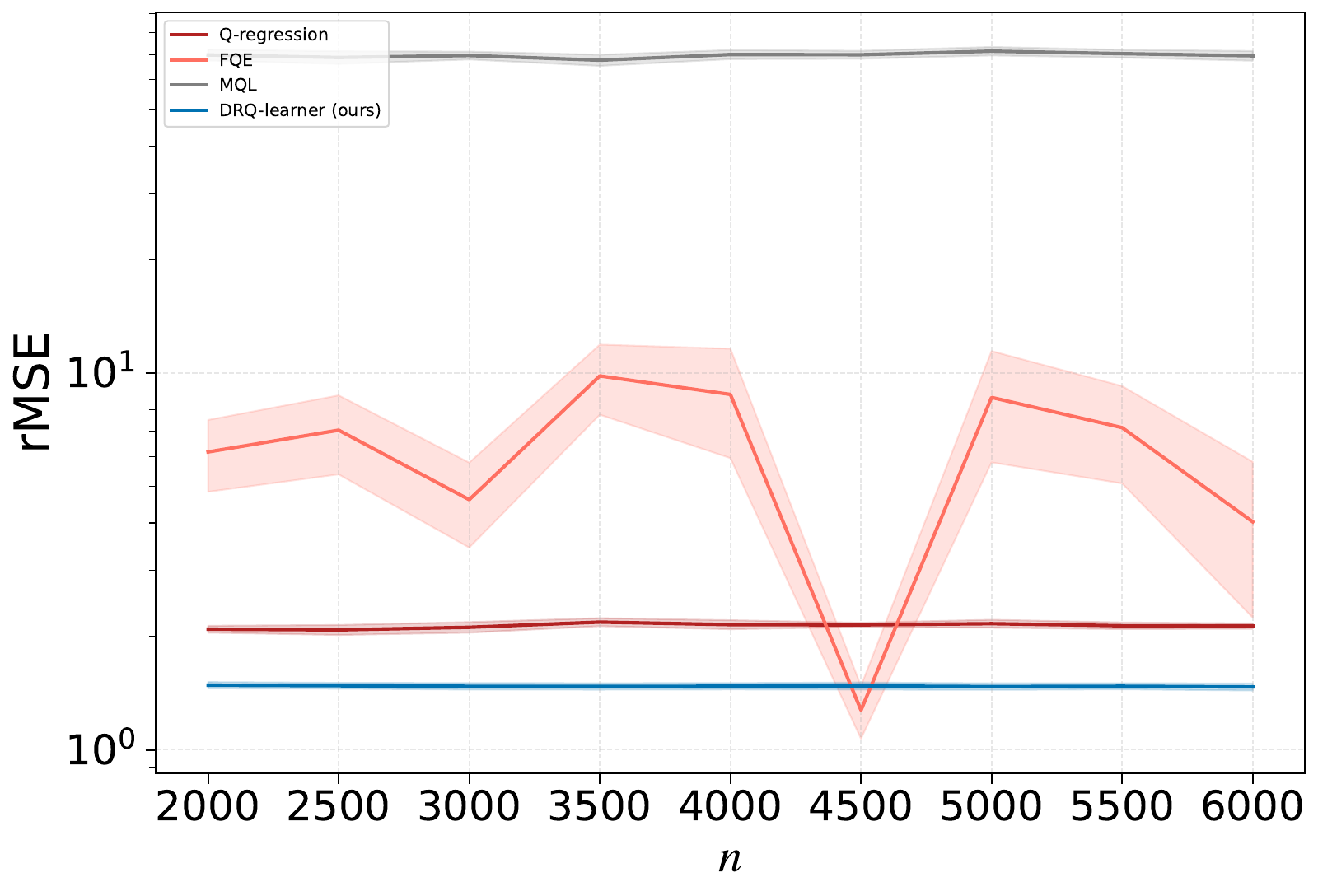}
    \label{fig:Taxi_n_restricted}
  \end{minipage}\hfill
  \begin{minipage}[t]{0.32\linewidth}
    \centering
    \subcapt{\textbf{B2:} varying length of horizon}\\[0.4cm] 
    \includegraphics[width=\linewidth]{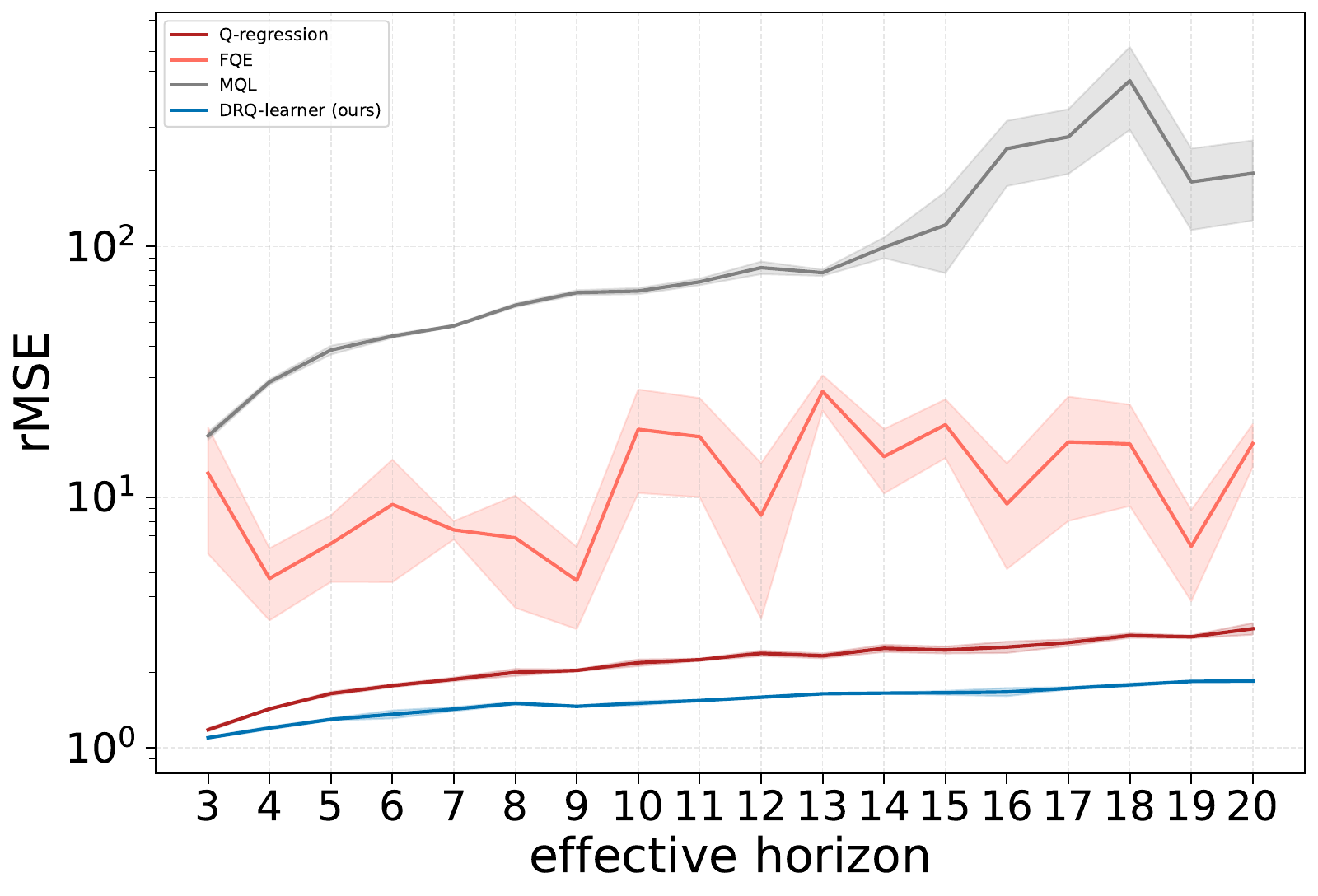}
    \label{fig:Taxi_gamma_restricted}
  \end{minipage}\hfill
  \begin{minipage}[t]{0.32\linewidth}
    \centering
    \subcapt{\textbf{B3:} varying overlap} 
    \includegraphics[width=\linewidth]{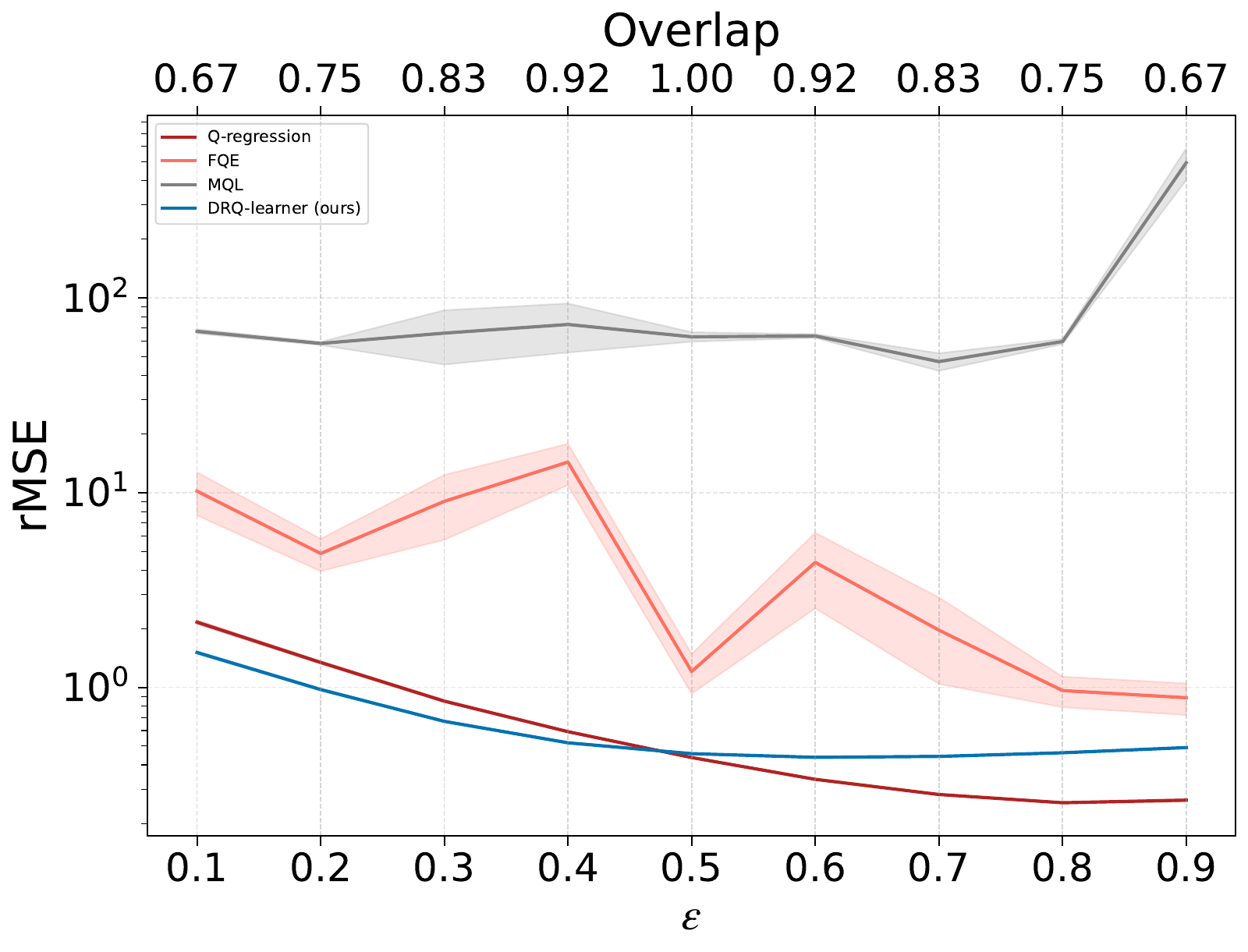}
    \label{fig:Taxi_epsilon_restricted}
  \end{minipage}
  \vspace{-.7cm}
  \caption{\textbf{Setting B -- Taxi environment:} linear model class $\mathcal{G}$. The results confirm that \textit{our theory and thus our \method (in \textcolor{NavyBlue}{blue}) are applicable to different (restricted) function classes}.}
  \label{fig:Taxi_all_restricted}
\end{figure}



\begin{figure}[ht]
    \vspace{0cm}
  \centering
  \begin{minipage}[t]{0.32\linewidth}
    \centering
    \subcapt{\textbf{A1:} varying dataset size}\\[0.4cm] 
    \includegraphics[width=\linewidth]{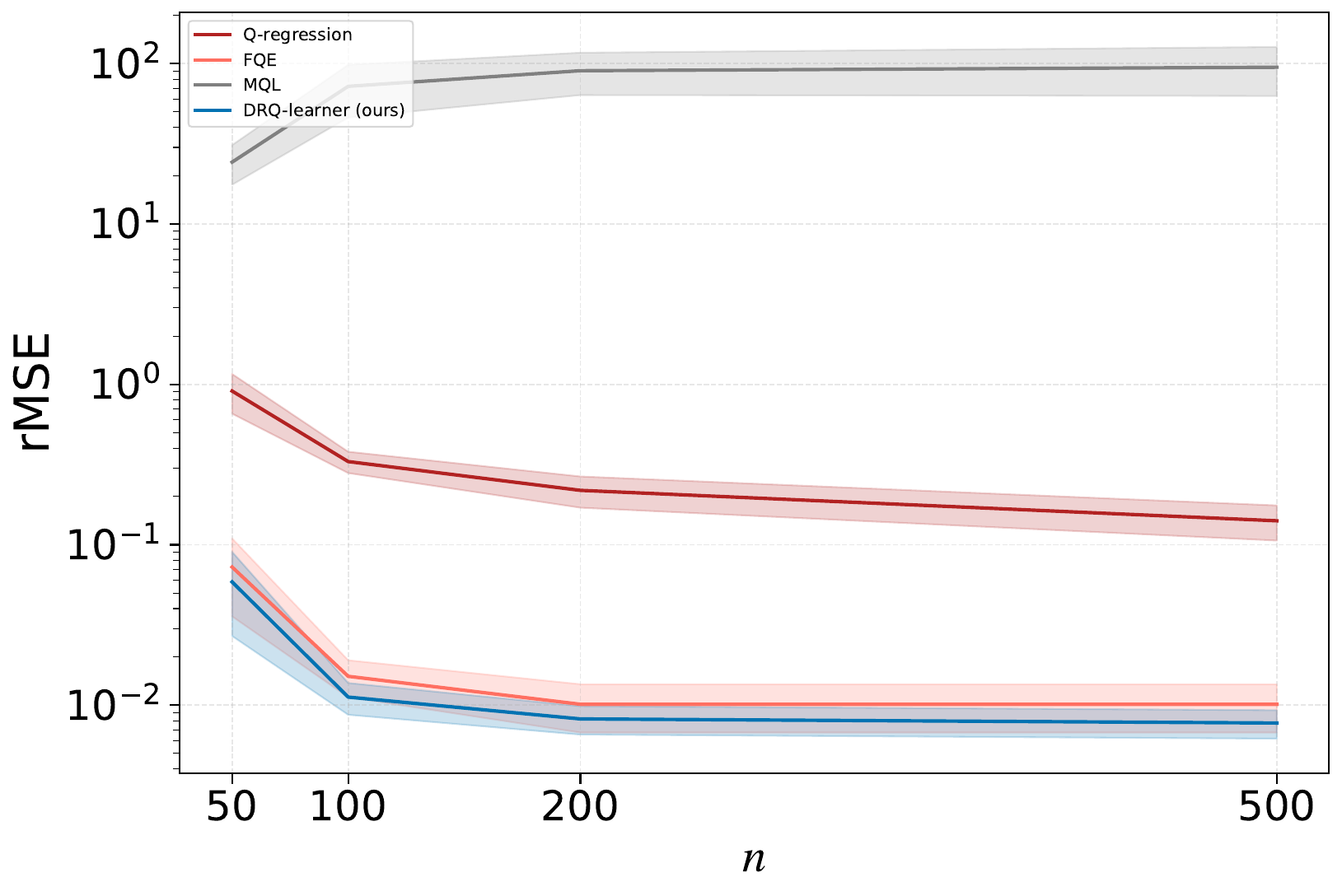}
    \label{fig:Lake_n}
  \end{minipage}\hfill
  \begin{minipage}[t]{0.32\linewidth}
    \centering
    \subcapt{\textbf{A2:} varying length of horizon}\\[0.4cm] 
    \includegraphics[width=\linewidth]{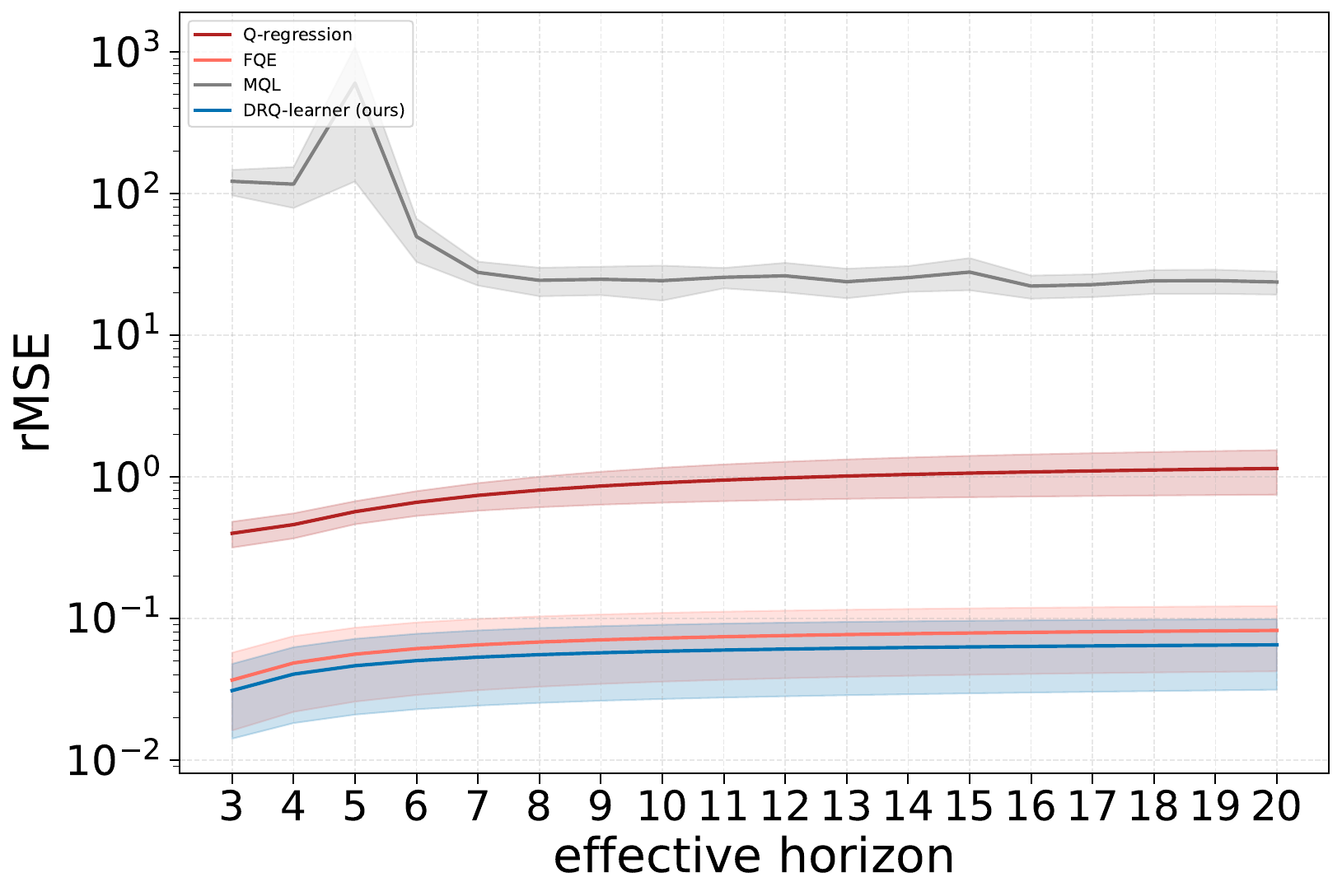}
    \label{fig:Lake_gamma}
  \end{minipage}\hfill
  \begin{minipage}[t]{0.32\linewidth}
    \centering
    \subcapt{\textbf{A3:} varying overlap} 
    \includegraphics[width=\linewidth]{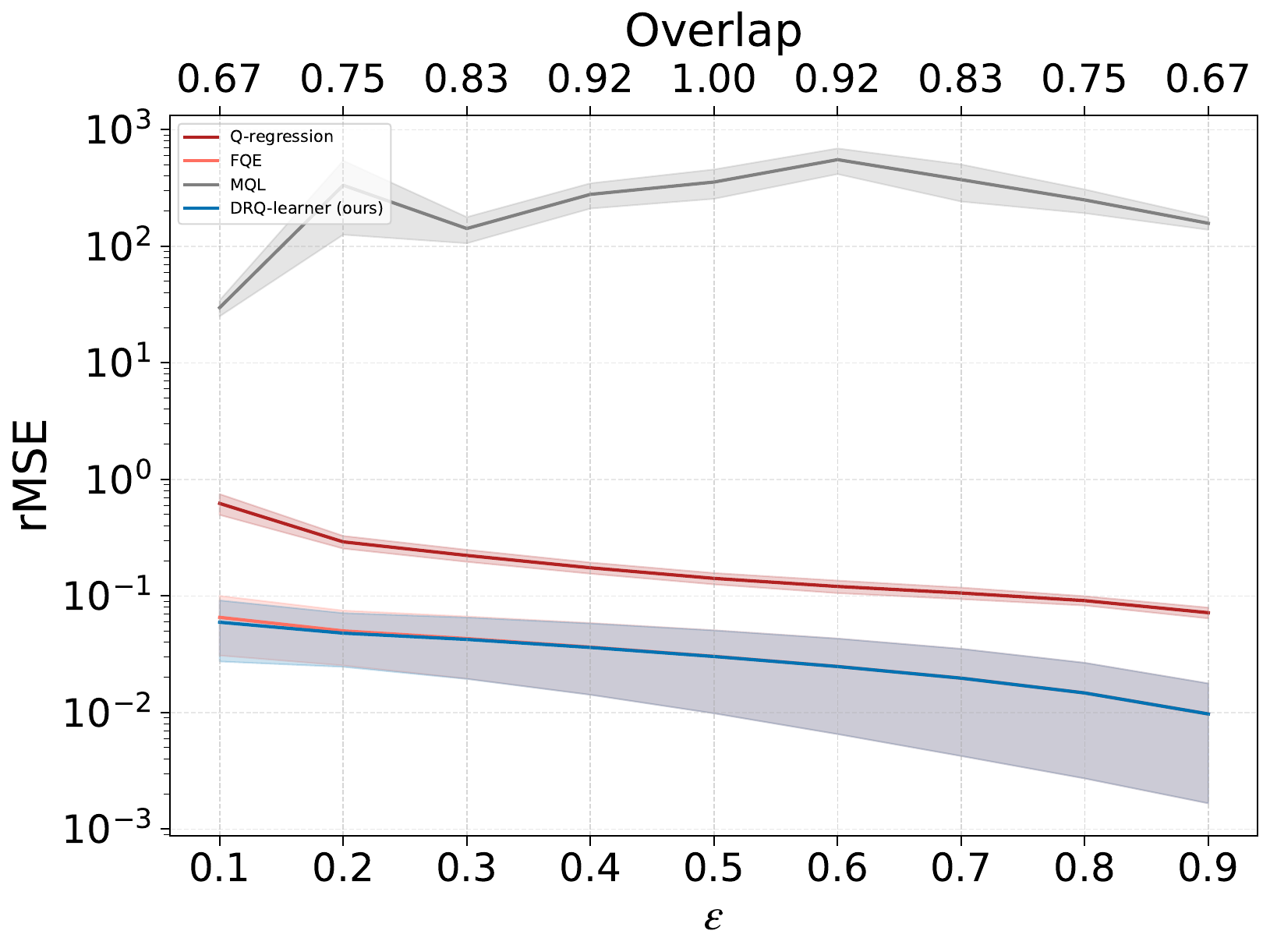}
    \label{fig:Lake_epsilon}
  \end{minipage}
  \vspace{-.7cm}
  \caption{
  \textbf{Setting A -- Frozen Lake environment:} Unrestricted model class $\mathcal{G}$. The results \textit{confirm the theoretical properties}: our \method in \textcolor{NavyBlue}{blue} is better than the plug-in learners in \textcolor{BrickRed}{red}/\textcolor{orange}{orange}, robust for varying lengths of the horizon.
  }
  \label{fig:Lake_all}
\end{figure}


\textbf{Results:} Results for Setting~\textbf{A} (unrestricted) are in Fig.~\ref{fig:Taxi_all}. Our \method performs best across a variety of configurations. In particular, we confirm: \circledred{1} \textit{our method consistently outperforms the plug-in learners}. Further, our experiments show our method successfully incorporates the density ratio nuisance without degrading performance in the low overlap scenario (see Fig.~\hyperref[fig:Taxi_epsilon]{A3}). Hence, \textit{the empirical results confirm our theoretical properties.} In particular, we confirm \circledred{2} that our \method is especially effective for long horizons and for low overlap settings in line with our theory. Results for Setting~\textbf{B} (restricted) are in Fig.~\ref{fig:Taxi_all_restricted}. Our method is highly effective and performs best for many settings, especially with low overlap\footnote{
\textit{Note about Figure~\ref{fig:Taxi_all_restricted}:} At large $\varepsilon$, the target policy becomes nearly random, making Q-regression behave unusually well. This is because the density ratios shrink rather than explode, removing its usual instability. In this easy-nuisance regime the robustness and asymptotic benefits of \method matter less, so its finite-sample performance does not strictly dominate.
}. Thereby, we confirm \circledred{3} that our theory is also applicable to restricted model classes.

\newpage
\textbf{Conclusion:} In sum, our \method is the first approach to jointly achieve double robustness, Neyman-orthogonality, and quasi-oracle efficiency. Thereby, we provide a principled and flexible foundation for \textit{reliable} individualized decision-making in sequential settings. A particular advantage of our \method is its flexibility to accommodate real-world constraints such as interpretability or fairness into the solution space $\mathcal{G}$.

\newpage

\section*{Acknowledgments}
This paper is supported by the DAAD programme Konrad Zuse Schools of Excellence in Artificial Intelligence, sponsored by the Federal Ministry of Research, Technology and Space. Additionally, this work has been supported by the
German Federal Ministry of Education and Research (Grant: 01IS24082).

\section*{Ethics statement}

Our work develops a theoretically principled approach, the \method, for estimating individualized potential outcomes in MDPs from observational data. The primary goal of this work is to improve the \textit{reliability} of decision-making algorithms, particularly in high-stakes settings such as personalized medicine. To promote reliable decision-making, we focus on identifiability results that transparently state the boundary conditions of our method and thus ensure when our method can be safely used. 

\emph{Potential benefits and risks:} Our method aims to enhance safe and effective individualized decision-making by providing more statistically reliable $Q$-function estimates even under model misspecification and low-overlap conditions. This can ultimately contribute to better treatment policies in healthcare and other critical applications. However, like any method that can be applied to personalized decision-making, misuse in inappropriate or sensitive settings could have unintended negative consequences (e.g., reinforcing biases present in observational data). We therefore emphasize that a successful application of our method requires domain expertise to ensure causal assumptions such as unconfoundedness and positivity, which are necessary for identifiability. Our work explicitly frames $Q$-function estimation as a causal inference problem, aligning with recent arguments that reliable algorithmic decision-making must be grounded in causal reasoning to ensure valid and trustworthy deployment \citep[cf.][]{Kern.2025}.

\emph{Societal and fairness considerations:} While our method is model-agnostic and does not impose fairness constraints by design, it can be combined with fairness-aware modeling or post-hoc policy adjustment techniques. We encourage practitioners to monitor for disparate impact across subpopulations when deploying systems trained with \method, especially in high-stakes domains.

\section*{Reproducibility statement}

We have taken multiple steps to ensure the reproducibility of our results. All theoretical contributions, including identifiability results, the derivation of the efficient influence function, and the proof of double robustness and quasi-oracle efficiency, are presented in full detail in the main text and rigorously proven in Appendix~\ref{appx:proofs}. Our algorithm is specified formally in Section~\ref{sec:theoretical_results} and summarized in pseudocode to facilitate implementation in Section~\ref{sec:implementation}. Hyperparameters, model classes, and training details are provided in Appendix~\ref{appx:implementation_details}. For our empirical evaluation, we rely exclusively on environments from the OpenAI Gym package, as described in Section~\ref{sec:experiments}, which ensures that experiments can be exactly reproduced by other researchers. We also provide an anonymized, open-source implementation of \method and scripts to reproduce all figures and tables, available at \mbox{\url{https://github.com/EmilJavurek/Orthogonal-Q-in-MDPs}}. Upon acceptance, we will make our code publicly available via GitHub repository. Together, these materials enable independent researchers to fully replicate both the theoretical and empirical results presented in this work.


\newpage
\bibliography{cit}
\bibliographystyle{iclr2026_conference}

\newpage
\appendix

\raggedbottom


\newpage 
\section{Extended Related Work} \label{app:extended_related_work}

Here, we provide an extended related work to offer additional context for our work.

\textbf{Off-policy $Q$-function evaluation:} 
Methods targeting the off-policy $Q$-function from MDPs, as is our goal, are often presented as plug-in off-policy evaluation (OPE) methods. In the OPE literature, $Q_{\pi_e}$ is a nuisance function\footnote{By $Q_{\pi_e}$, we mean the off-policy $Q$ of an evaluation policy $\pi_e$ that differs from the policy $\pi_b$ that we observe data from.}, where a new fitting procedure for $Q_{\pi_e}$ is taken to imply a new plug-in learner for OPE. An odd consequence of this is that the performance of $\Hat{Q}_{\pi_e}$ \textit{function} estimation is often evaluated only via the implied performance of the estimated \textit{scalar average} off-policy policy value. Yet, many practical applications have direct interest in estimating individualized outcomes such as $Q_{\pi_e}$ to personalize medical decisions \citep{Feuerriegel.2024}, 
and, hence, we focus here on estimating $Q_{\pi_e}$ directly.
Existing $Q_{\pi_e}$ estimation techniques
address the off-policy nature of the problem 
either explicitly via an inverse-propensity-weighting-like nuisance \citep{Liu.29102018,Farajtabar.2102018,Uehara.28102019,Munos.682016}, 
or implicitly in the (supervised learning) target construction \mbox{\citep{LeM.3202019,Lagoudakis.2003,Precup2000,Harutyunyan.2162016}.} 
Finally, we mention the work of \citet{vanderLaan.1122025}, who have developed a debiased estimator for \textit{linear functionals of} $Q_{\pi_e}$. While this generalizes debiased estimation from just OPE to all linear functionals of $Q_{\pi_e}$, it cannot be applied to $Q_{\pi_e}$ itself.


\textbf{Potential outcomes in MDPs:} Off-policy (potential outcome) estimation in MDPs is commonly encountered in OPE for RL. Here, the goal is to estimate the \textit{scalar} policy value of an evaluation policy different from the one that generated the observed MDP trajectories. Various doubly-robust meta-learning methods have been developed to make the OPE estimate robust to errors in the learned nuisances \citep{Kallus.12092019,Farajtabar.2102018,Shi.5102021}. Notably, \citet{Kallus.12092019} have derived the efficient influence function of the off-policy policy value and a corresponding efficient DR-learner. For a detailed statistical overview of OPE in RL, see \citet{Uehara.13122022}. However, \textit{none} of these learners are targeted at $Q$-function estimation, but only target the scalar policy value instead.

\textbf{Individualized potential outcomes over time:} Several methods have been proposed for estimating individualized potential outcomes in time-series settings \citep{Lim.2018,Bica.2020,ValentynMelnychuk.2022,RuiLi.2021,Hess.31052024,Lewis.2172020,hessOverlapweightedOrthogonalMetalearner2025}. These can be grouped into both model-based (e.g., adaptations of the transformer architecture for estimating individualized potential outcomes over time, such as in \citep{ValentynMelnychuk.2022}) and meta-learners (e.g., model-agnostic ``recipes'' for leveraging existing models to perform valid causal inference). Notably, \citet{Frauen.07072024} have derived a DR-learner and variations thereof. Methods in this stream target the conditional average potential outcome $Y_{t+\tau}, \tau > 0$ provided the entire history $H_t$ up to time $t$. Hence, while these methods could theoretically be adapted to target the long-term average of future outcomes (our goal), they do \textit{not} take advantage of the Markov structure of MDPs and thus suffer from the curse of horizon.

\textbf{DTR:} An adjacent field are dynamic treatment regimes (DTR), which are concerned with optimizing (individualized) treatment assignment in a time-series setting. For an overview, see \citet{Chakraborty.2013}. While there are extensions of DTRs using machine learning \citep[e.g.,][]{TheresaBlumlein.2022}, these have limitations for our setting. In particular, the DTR literature also typically does \textit{not} consider the MDP setting, and methods from DTR thus suffer from the same curse of horizon as other general time-series methods.


\newpage
\section{Additional details} \label{app:additional-details}

\textbf{Identification.} To be able to estimate this causal quantity from observational data generated with $\textcolor{blue_pib}{\pi_b}$, we need the following standard identification assumptions \citep{Robins.2000,Uehara.13122022}: (1)~\textit{Weak positivity:} The support of $\textcolor{red_pie}{\pi_e}(\cdot \mid s)$ is included in the support of $\textcolor{blue_pib}{\pi_b}(\cdot \mid s)$ for any $s \in \mathcal{S}$. (2)~\textit{Consistency:} $R_t = R_t[A_t]$, almost surely. (3)~\textit{Unconfoundedness:} For any $a \in \mathcal{A}$, $A$ and $R[a]$ are conditionally independent given $S$, i.e., $A_t \independent R_t[a_t] \mid S_t $. Of note, these assumptions are standard in the causal inference literature \citep{Lim.2018,Bica.2020,ValentynMelnychuk.2022,Seedat.6162022,Frauen.07072024}

\textbf{Curse of horizon.}
The curse of horizon refers to the phenomenon that estimation error in off-policy RL grows exponentially with the time horizon when the estimand depends on long sequences of actions. The reason for this is the exponential decay in overlap between trajectories from the observed and evaluation policies. This is an inherent difficulty of the setting, for all estimation tasks in the off-policy setting and any corresponding approaches. The technical challenge lies in successfully and also efficiently leveraging the time-invariant Markov property of the MDP setting to break this curse. We refer to \citet{Kallus.12092019} for exhaustive treatment of this problem for off policy policy value estimation.

\textbf{Practical aspects of nuisance estimation.}
The nuisances required by our second-stage model can in practice be quite complex and difficult to estimate. Here, we give several thoughts on this topic:

\begin{itemize}
    \item[i)]
\textit{The second stage is agnostic to choice of nuisance estimation model.} Apart from the sufficient convergence guarantees, the practitioner is free to employ any method of their choosing to estimate the nuisance. This is especially important in cases such as ours, where some of the nuisances are more complex.

    \item[ii)] 
\textit{The two-stage learner can, in principle, only ever improve upon the 1st stage.} Since our method requires the estimation of the target $Q_{\pi_e}$ in the first stage nuisance estimation, we are, of course, always free to stick with the nuisance estimate. Our DRQ learner is of use in settings where additional estimation complexity is worth it for the strong theoretical guarantees we provide in return. This motivation is natural for many high-stakes real-world applications such as medical applications listed in the Introduction.

    \item[iii)]
\textit{The nuisances are only as complex as the underlying setting.} Given the principled derivation of our method with the efficient influence function, all nuisance functions present in the loss come from the statistical model inherent to the problem setting.
It is worth pointing out existing papers that propose estimation methods for similar stationary density ratio nuisances such as ours ( e.g., the work in [1]). Conversely, methods not using these complex nuisances may be deceptively simple: FQE does not use this nuisance but is well known to have unpredictable failure regimes (see “deadly triad” problem, Sutton \& Barto (2018)).

\end{itemize}


\newpage 
\section{Background on influence functions, orthogonal learning} \label{app:related_theory}

In this section, we provide a brief overview of efficient influence functions and orthogonal learning, following the treatment in \citep{Kennedy.12032022}.  

\textbf{Efficient influence function (EIF).}  
In semiparametric statistics, estimation is framed in terms of a statistical model $\{P \in \mathcal{P}\}$, where $\mathcal{P}$ denotes a family of probability distributions. We are interested in a functional $\psi: \mathcal{P} \rightarrow \mathbb{R}$. For instance, one might consider  
$\psi(P) = \mathbb{E}_{P}[R|S=s]$.  
If $\psi$ is sufficiently smooth, it admits a von Mises (distributional Taylor) expansion:
\begin{equation}
\psi(\bar{P}) - \psi(P) = \int \phi(t, \bar{P}) \, \di(\bar{P} - P)(t) + R_2(\bar{P}, P),
\end{equation}
where $R_2(\bar{P}, P)$ is a second-order remainder term and $\phi(t, P)$ is the \textit{efficient influence function} (EIF) of $\psi$. By definition, the EIF satisfies  
$\int \phi(t, P) \di P(t) = 0$ and $\int \phi(t, P)^2 \di P(t) < \infty$.  

\textbf{Plug-in bias and bias correction.}  
Consider an estimator $\hat{P}$ of $P$ and the associated plug-in estimator $\psi(\hat{P})$. The expansion above implies a first-order \textit{plug-in bias}:  
\begin{equation}
\psi(\hat{P}) - \psi(P) = - \int \phi(t, \hat{P}) \, \di P(t) + R_2(\hat{P}, P),
\end{equation}
because $\int \phi(t, \hat{P}) d\hat{P}(t) = 0$.  
Intuitively, simply plugging estimated nuisance functions into the identification formula generally leads to a biased estimator.  
A classical way to correct this bias is to estimate the term on the right-hand side and add it back, yielding a \textit{one-step bias-corrected estimator}:
\begin{equation}
\hat{\psi} = \psi(\hat{P}) + \mathbb{P}_n \bigl[\phi(T, \hat{P})\bigr].
\end{equation}
This correction removes the leading-order bias, leaving only a second-order remainder.  

\textbf{Debiased target loss and orthogonality.}  
While one-step correction works well for finite-dimensional parameters such as average treatment effects, it is not directly applicable for infinite-dimensional targets such as conditional treatment effects $\tau_t(X)$.  
In such settings, the EIF can still be used to construct a \textit{debiased loss function} rather than directly de-biasing the target parameter.  
Minimizing this orthogonalized loss leads to estimators that are first-order insensitive to nuisance estimation error, which is the core idea behind Neyman-orthogonal learners.

\newpage 
\begingroup\makeatletter\def\f@size{9}\check@mathfonts
\section{Proofs}\label{appx:proofs}

\subsection{Derivation of our Loss}\label{appx:derivation_loss}

We construct our Neyman-orthogonal loss by debiasing the ERM \textit{estimate} using the efficient influence function (EIF). We begin by taking the EIF of a standard MSE loss $L^1_{\pi_e}$.

\textsc{Statistical Model}

First, we must define our statistical model. Let us define a model for observations $O = (S,A,R,\Tilde{S}) \in \mathcal{S}^2 \times \mathcal{A} \times \mathcal{R}$ via
\begin{align}
    & \mathcal{M} = \left\{p \mid p(o) = p(s)p(a|s)p(r|s,a)p(\Tilde{s}|s,a) ; p(s)p(a|s) > 0\right\}.
\end{align}
We denote the (unknown) true data-generating (observational) distribution $\mathbb{P} \in \mathcal{M}$ and a one-dimensional parametrized submodel of distributions by
\begin{align}
    \mathcal{P}_\epsilon = \left\{p_\epsilon \mid p_{\epsilon}(o) = p(o) + \epsilon ( p'(o) - p(o)) ; \epsilon \in [0,1) \right\}
\end{align}
where $\mathcal{P}_\epsilon \subset \mathcal{M}$, i.e., $p_{\epsilon} \in \mathcal{M}$. We take $p$ without subscript to be a density corresponding to $\mathbb{P}$, i.e., $p(a|s) = \pi_b(a|s), p(r|s,a) = p_r(r|s,a), p(\Tilde{s}|s,a) = p_s(\Tilde{s}|s,a)$.

We take the strategy advocated by \cite{Kennedy.12032022}, where, by cleverly choosing the parametric submodel to represent point-mass deviation from $\mathbb{P}$, i.e. the Dirac delta at point $O'$, $p'(o) = \delta(O'=o)$, the EIF derivation reduces to taking a Gateaux derivative
\begin{align}
    \eif(F(\mathbb{P}),O') = \tfrac{\partial}{\partial\epsilon}F(\mathbb{P}_{\epsilon})\bigg|_{\epsilon = 0}.
\end{align}
We refer to \citet{Kennedy.12032022,Fisher.08102018} for comprehensive tutorials and technical details of efficient influence functions.

\textsc{Taking the EIF of $L^{1}_{\pi_e}$}


With the MSE population risk under the evaluation distribution $L^{1}_{\pi_e}$ defined as 
\begingroup\makeatletter\def\f@size{9}\check@mathfonts
\begin{align}
    L^{1}_{\pi_e}(\eta,g) &= 
    \expe_{O \sim p_e} \left[ \left(Q_{\pi_e}(S,a) - g(S,a) \right)^2 \right] = \expe_{O \sim p_b} \left[\sum_a \pi_e(a|S) \left(Q_{\pi_e}(S,a) - g(S,a) \right)^2 \right] , \label{eq:loss1}
\end{align}
\endgroup
we take the EIF via
\begin{align}
    & \eif(L^{1}_{\pi_e}(\eta,g), O') = \\
    =& \sum_{a} \pi_e(a|S') \left(Q_{\pi_e}(S,a) - g(S,a) \right)^2 - L^{1}_{\pi_e}(\eta,g) \\
    &+ \int \sum_a p_{b}(s)\pi_e(a|s) 2 \left(Q_{\pi_e}(s,a) - g(s,a) \right) \eif(Q_{\pi_e}(s,a),O') \rm ds.
\end{align}
To derive $\eif(Q_{\pi_e}(s,a),O')$, we decompose the $Q_{\pi_e}$ via its definition $Q_{\pi_e}(s,a) = \expe_{\pi}\left[\sum_{t=0}^{\infty}\gamma^t R_t \middle| S_0 =s,A_0 = a \right]$. Taking the EIF of the individual elements of the sum, we have, sequentially, the EIF for the the null and first conditional expected reward by
\begingroup\makeatletter\def\f@size{8}\check@mathfonts
\begin{align}
    &\eif \left(\expe_{\pi_e}[R_0|S_0=s_0,A_0=a_0], O' \right) = \frac{\delta(s_0=S',a_0=A')}{p_{b}(S=s_0,A=a_0)}(R' - \expe[R_0|S_0=s_0,A_0=a_0]) 
    \\ 
    &\eif\left(\expe_{\pi_e}[R_1|S_0=s_0,A_0=a_0],O' \right) = 
    \\
    &= \eif \left(\int p(s_1|s_0,a_0)\pi_e(a_1|s_1)p(r_1|s_1,a_1)r_1 \rm ds_1 \rm d a_1 \rm d r_1 \right)
    \\
    &= \int \left\{\frac{\delta(s_0=S',a_0=A')}{p_{b}(S=s_0,A=a_0)}(\delta(s_1=\Tilde{S}')-p(s_1|s_0,a_0))\right\}\pi_e(a_1|s_1)p(r_1|s_1,a_1)r_1 \rm ds_1 \rm d a_1 \rm d r_1 \\
    &+ \int p(s_1|s_0,a_0)\pi_e(a_1|s_1)\left\{\frac{\delta(s_1=S',a_1=A')}{p_{b}(S = s_1,A=a_1)}(\delta(r_1 = R')-p(r_1|s_1,a_1)\right\}r_1 \rm ds_1 \rm d a_1 \rm d r_1 
    \\
    &= \frac{\delta(s_0=S',a_0=A')}{p_{b}(S=s_0,A=a_0)} \Big\{ \expe_{\pi_e}[R_1|S_1=\Tilde{S}'] 
    - \expe_{\pi_e}[R_1|S_0=s_0,A_0=a_0] \Big\} \\
    &+ \frac{p_{e}(S_1=S',A_1=A'|S_0=s_0,A_0=a_0)}{p_{b}(S=S',A=A')} \Big\{R' - \expe_{\pi_e}[R_1|S_1=S',A_1=A'] \Big\} .
\end{align}
\endgroup
We further yield the EIF for the second conditional expected reward by
\begingroup\makeatletter\def\f@size{8}\check@mathfonts
\begin{align}
    &\eif \left(\expe_{\pi_e}[R_2|S_0=s_0,A_0=a_0], O' \right) = 
    \\
    &= \eif \left( \int p(s_1|s_0,a_0)\pi_e(a_1|s_1)p(s_2|s_1,a_1)\pi_e(a_2|s_2)p(r_2|s_2,a_2)r_2 \rm ds_1 \rm da_1 \rm ds_2 \rm da_2 \rm dr_2 \right)
    \\ 
    &= \int \left\{\frac{\delta(s_0=S',a_0=A')}{p_{b}(S=s_0,A=a_0)}(\delta(s_1=\Tilde{S}')-p(s_1|s_0,a_0))\right\} \pi_e(a_1|s_1)p(s_2|s_1,a_1)\pi_e(a_2|s_2)p(r_2|s_2,a_2)r_2 \rm ds_1 \rm da_1 \rm ds_2 \rm da_2 \rm dr_2
    \\
    &+ \int p(s_1|s_0,a_0)\pi_e(a_1|s_1) \left\{\frac{\delta(s_1=S',a_1=A')}{p_{b}(S = s_1,A=a_1)}(\delta(s_2 = \Tilde{S}')-p(s_2|s_1,a_1)\right\} \pi_e(a_2|s_2)p(r_2|s_2,a_2)r_2 \rm ds_1 \rm da_1 \rm ds_2 \rm da_2 \rm dr_2
    \\
    &+ \int p(s_1|s_0,a_0)\pi_e(a_1|s_1)p(s_2|s_1,a_1)\pi_e(a_2|s_2) \left\{\frac{\delta(s_2=S',a_2=A')}{p_{b}(S=s_2,A=a_2)}(\delta(r_2=R') - p(r_2|s_2,a_2) \right\}r_2 \rm ds_1 \rm da_1 \rm ds_2 \rm da_2 \rm dr_2
    \\ 
    &= \frac{\delta(s_0=S',a_0=A')}{p_{b}(S=s_0,A=a_0)} \Big\{ \expe_{\pi_e}[R_2|S_1=\Tilde{S}'] 
    - \expe_{\pi_e}[R_2|S_0=s_0,A_0=a_0] \Big\} 
    \\
    &+ \frac{p_{e}(S_1=S',A_1=A'|S_0=s_0,A_0=a_0)}{p_{b}(S=S',A=A')} \Big\{\expe_{\pi_e}[R_2|S_2=\Tilde{S}'] - \expe_{\pi_e}[R_2|S_1=S',A_1=A'] \Big\}
    \\
    &+ \frac{p_{e}(S_2=S',A_2=A'|S_0=s_0,A_0=a_0)}{p_{b}(S=S',A=A')} \Big\{R' - \expe_{\pi_e}[R_2|S_2=S',A_2=A'] \Big\} .
\end{align}
\endgroup
Generally, for $k\geq1$ (where we abuse the notation with the arrows for readability), we thus have
\begingroup\makeatletter\def\f@size{8}\check@mathfonts
\begin{align*}
    &\eif \left( \expe_{\pi_e}[R_k|S_0=s_0,A_0=a_0], O' \right) = 
    \\
    &= \eif \left(\int p_{e}(s_1\rightarrow r_k|s_0,a_0)r_k \rm ds_1 \rightarrow \rm dr_k, O' \right)
    \\
    &= \int \sum_{t=1}^{k}\eif(p(s_t|s_{t-1},a_{t-1})) \frac{p_{e}(s_1\rightarrow r_k|s_0,a_0)}{p(s_t|s_{t-1},a_{t-1})} r_k \rm ds_1 \rightarrow \rm dr_k
    \\
    &+ \int \eif(p(r_k|s_k,a_k)) \frac{p_{e}(s_1\rightarrow r_k|s_0,a_0)}{p(r_k|s_k,a_k)}r_k \rm ds_1 \rightarrow \rm dr_k
    \\
    &= \frac{\delta(s_0=S',a_0=A')}{p_{b}(S=s_0,A=a_0)} \Big\{ \expe_{\pi_e}[R_k|S_1=\Tilde{S}'] 
    - \expe_{\pi_e}[R_k|S_0=s_0,A_0=a_0] \Big\}
    \\
    &+ \sum_{t=1}^{k-1} \frac{p_{e}(S_t=S',A_t=A'|S_0=s_0,A_0=a_0)}{p_{b}(S=S',A=A')} \Big\{\expe_{\pi_e}[R_k|S_{t+1}=\Tilde{S}'] - \expe_{\pi_e}[R_k|S_t=S',A_t=A'] \Big\}
    \\
    &+ \frac{p_{e}(S_k=S',A_k=A'|S_0=s_0,A_0=a_0)}{p_{b}(S_k=S',A_k=A')} \Big\{R' - \expe_{\pi_e}[R_k|S_k=S',A_k=A'] \Big\}.
\end{align*}
\endgroup
Putting it together, we get
\begingroup\makeatletter\def\f@size{8}\check@mathfonts
\begin{align*}
    &\eif \left(\expe_{\pi_e}[\sum_{t=0}^k \gamma^t R_t|S_0=s_0, A_0=a_0], O' \right) 
    \\
    &= \sum_{t=0}^{k} \gamma^t \eif \left(\expe_{\pi_e}[R_t|S_0=s_0,A_0=a_0] \right)
    \\
    &= \frac{\delta(s_0=S',a_0=A')}{p_{b}(S=s_0,A=a_0)} \Big\{
    R' + \sum_{t=1}^k \gamma ^t \expe_{\pi_e}[R_t|S_1=\Tilde{S}']
    - \sum_{t=0}^k \gamma ^t \expe_{\pi_e}[R_t|S_0=s_0,A_0=a_0] \Big\}
    \\
    &+ \frac{\gamma p_{e}(S_1=S',A_1=A'|S_0=s_0,A_0=a_0)}{p_{b}(S=S',A=A')} \Big\{
    R' + \sum_{t=2}^k \gamma^{t-1} \expe_{\pi_e}[R_t|S_2=\Tilde{S}']
    - \sum_{t=1}^k \gamma^{t-1} \expe_{\pi_e}[R_t|S_1=S',A_1=A'] \Big\}
    \\
    & \qquad \vdots
    \\
    &+ \frac{\gamma^j p_{e}(S_j=S',A_j=A'|S_0=s_0,A_0=a_0)}{p_{b}(S=S',A=A')} \Big\{
    R' + \sum_{t=j+1}^{k} \gamma^{t-j} \expe_{\pi_e}[R_t|S_{j+1}=\Tilde{S}']
    - \sum_{t=j}^k \gamma^{t-j} \expe_{\pi_e}[R_t|S_j=S',A_j=A'] \Big\} 
    \\
    & \qquad  \vdots
    \\
    &+ \frac{\gamma^k p_{e}(S_k=S',A_k=A'|S_0=s_0,A_0=a_0)}{p_{b}(S=S',A=A')} \Big\{
    R' - \expe_{\pi_e}[R_k|S_k=S',A_k=A'] \Big\},
\end{align*}
\endgroup
for $2 \leq j < k$.
Now, we recognize that, for all second terms in the brackets,  we yield
\begingroup\makeatletter\def\f@size{8}\check@mathfonts
\begin{align}
    \sum_{t=j+1}^{k} \gamma^{t-j} \expe_{\pi_e}[R_t|S_{j+1}=\Tilde{S}']
    =
    \sum_{t=0}^{k-(j+1)} \gamma^{t+1} \expe_{\pi_e}[R_t|S_{0}=\Tilde{S}']
    \rightarrow \gamma v_{\pi_e}(\Tilde{S}') \text{ as } k \rightarrow \infty,
\end{align}
\endgroup
and, analogously, for the final terms, we yield
\begingroup\makeatletter\def\f@size{8}\check@mathfonts
\begin{align}
    \sum_{t=j}^k \gamma^{t-j} \expe_{\pi_e}[R_t|S_j=S',A_j=A']
    =
    \sum_{t=0}^{k-j} \gamma^{t} \expe_{\pi_e}[R_t|S_0=S',A_0=A']
    \rightarrow Q_{\pi_e}(S',A') \text{ as } k \rightarrow \infty.
\end{align}
\endgroup
Recognizing that, in the limit, the brackets are equivalent, we find the limit of the whole expression to be 
\begingroup\makeatletter\def\f@size{8}\check@mathfonts
\begin{align}
    &\eif (Q_{\pi_e}(s_0,a_0), O') = 
    \eif \left(\lim_{k\rightarrow\infty} \expe_{\pi_e}[\sum_{t=0}^k \gamma^t R_t|S_0=s_0, A_0=a_0], O' \right)  \\
    =& \left(\frac{\delta(s_0=S',a_0=A')}{p_b(S')\pi_b(A'|S')} + \frac{\pi_e(A'|S')}{\pi_b(A'|S')}w_{e/b}(S'|s_0,a_0) \right)
    \Big\{R' +\gamma v_{\pi_e}(\Tilde{S}') - Q_{\pi_e}(S',A')\Big\} .
    \label{eq:eif_Q}
\end{align}
\endgroup
Plugging the result into the EIF of $L^{1}_{\pi_e}$, we obtain
\begingroup\makeatletter\def\f@size{8}\check@mathfonts
\begin{align}
    & \eif(L^{1}_{\pi_e}(\eta,g), O') = \\
    =& \sum_{a} \pi_e(a|S') \left(Q_{\pi_e}(S',a) - g(S',a) \right)^2 - L^{1}_{\pi_e}(\eta,g) 
    + 2\left\{R' + \gamma v_{\pi_e}(\Tilde{S}') - Q_{\pi_e}(S',A') \right\}\frac{\pi_e(A'|S')}{\pi_b(A'|S')} \\
    &\times
    \Bigg[
    Q_{\pi_e}(S',A')-g(S',A')
    + \expe_{s,a \sim p_b(s)\pi_e(a|s)}\left[(Q_{\pi_e}(s,a)-g(s,a))w_{e/b}(S'|s,a)\right]
    \Bigg] .
\end{align}
\endgroup
\textsc{Debiasing the $L^{1}_{\pi_e}$}

Applying the EIF to debias the ERM \textit{estimate} of the population risk, we obtain a debiased loss
\begingroup\makeatletter\def\f@size{8}\check@mathfonts
\begin{align}
    &\Hat{L}^{2}_{\pi_e}(\eta,g) = \Hat{\expe}_{O' \sim p_b}\left[L^{1}_{\pi_e}(\eta,g)+\eif(L^{1}_{\pi_e}(\eta,g), O')\right] \\
    =&\Hat{\expe}_{O' \sim p_b}\Bigg\{\sum_{a} \pi_e(a|S') \left(Q_{\pi_e}(S',a) - g(S',a) \right)^2
    + 2\left\{R' + \gamma v_{\pi_e}(\Tilde{S}') - Q_{\pi_e}(S',A') \right\}\frac{\pi_e(A'|S')}{\pi_b(A'|S')} \\
    &\times
    \Bigg[
    Q_{\pi_e}(S',A')-g(S',A')
    + \expe_{s,a \sim p_b(s)\pi_e(a|s)}\left[(Q_{\pi_e}(s,a)-g(s,a))w_{e/b}(S'|s,a)\right]
    \Bigg]\Bigg\} .
\end{align}
\endgroup
We complete the squares to obtain a final loss:
\begingroup\makeatletter\def\f@size{8}\check@mathfonts
\begin{align}
    &\Hat{L}^{2}_{\pi_e}(\eta,g) \;\overset{\arg\min}{=}\; \Hat{L}^{3}_{\pi_e}(\eta,g)  \\
    =&  \Hat{\expe}_{O' \sim p_b}
    \Bigg[
    \sum_{a}\pi_e(a|S')
    \left(
    2\frac{\delta(A'=a)}{\pi_b(A'|S')}\left\{R' + \gamma v_{\pi_e}(\Tilde{S}') - Q_{\pi_e}(S',A') \right\} + Q_{\pi_e}(S',a)-g(S',a) \right)^2 \Bigg] \\
    +& \Hat{\expe}_{O' \sim p_b, s \sim p_b(s)} \Bigg[
    \sum_{a}\pi_e(a|s)
    \left(
    2\frac{\pi_e(A'|S')}{\pi_b(A'|S')}w_{e/b}(S'|s,a)
    \left\{R' + \gamma v_{\pi_e}(\Tilde{S}') - Q_{\pi_e}(S',A') \right\}
    + Q_{\pi_e}(s,a) - g(s,a)
    \right)^2 \Bigg]
\end{align}
\endgroup
This completes the derivation. The corresponding proof that $L^{3}_{\pi_e}$ is minimized by $Q_{\pi_e}$ can be found in \hyperref[proof:L3_correct_min]{Appendix~\ref*{proof:L3_correct_min}}. We continue with the proof that $L^{3}_{\pi_e}$ is Neyman-orthogonal.

\subsubsection{
Intuition behind \texorpdfstring{$\phi_1$}{phi-1} and \texorpdfstring{$\phi_2$}{phi-2}
}\label{app:intuition_pseudooutcomes}

Both $\phi_1$ and $\phi_2$ are the respective targets or “pseudo-outcomes” of the MSE subcomponents of the Neyman-oOrthogonal loss. Each contains the $Q_{\pi_e}$ term, which, if alone, would correspond simply to the standard MSE loss without debiasing. This is the non-Neyman-orthogonal starting point that we aim to debias in order to obtain robustness wrt. Nuisance estimation error. The additional debiasing terms of both $\phi_1$ and $\phi_2$ are two variations of temporal difference error (curly brackets $R' + \gamma v_{\pi_e}(\Tilde{S}') - Q_{\pi_e}(S',A')$) scaled by an importance-sampling-like density ratio.

While the density ratios here are quite complicated, the overall form of the Neyman-orthogonal loss is not. The debiasing being of the form “mean zero error scaled by density ratio” is common across many instances of DR learners in standard causal inference, including but not limited to ATE and CATE estimation.

 In $\phi_1$, the density-ratio is the simple one-step (first-step) inverse propensity weighting. In $\phi_2$, the density-ratio is of the conditional stationary state density. Borrowing from Markov chain terminology, 
$2\frac{\pi_e(A' \mid S')}{\pi_b(A' \mid S')}w_{e/b}(S' \mid s,a)$ is the ratio of the likelihood of observing the state-action pair $(S',A')$ following from the stationary distribution of the chain induced by following policy $\pi_e$ (conditional on the chain having begun at pair $s,a$) versus following from the initial distribution following policy $\pi_b$.


\subsection{Neyman-orthogonality of \texorpdfstring{$L^3$}{L3}} \label{appx:ney-ortho_proof}
First, we state a useful Lemma:

\begin{lemma}[Expected TD error is zero]\label{lemma:tderrorzero}
    The expectation of the temporal difference error of $\pi_e$ w.r.t. to any measurable distribution in the model (i.e., the distribution generated by any policy $\pi$), weighted by any (measurable and bounded) function $f(S',A')$ is zero. 
    \begin{align}
        \expe_{\pi}\left[f(S',A')\left(R' + \gamma v_{\pi_e}(\Tilde{S}') - Q_{\pi_e}(S',A')\right)\right] = 0
    \end{align}
\end{lemma}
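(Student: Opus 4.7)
The plan is to apply the tower property (law of iterated expectations), conditioning on the pair $(S',A')$. Since $f(S',A')$ is measurable with respect to the conditioning $\sigma$-algebra, it factors out of the inner conditional expectation, leaving
\begin{align*}
\expe_{\pi}\Big[f(S',A')\,\expe_{\pi}\big[R' + \gamma v_{\pi_e}(\Tilde{S}') - Q_{\pi_e}(S',A') \,\big|\, S',A'\big]\Big].
\end{align*}
It then suffices to show that the inner conditional expectation vanishes pointwise almost surely, which I would identify as an immediate consequence of the Bellman equation for $Q_{\pi_e}$ established in Theorem~\ref{thm:identifiability-short}.

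The key observation that makes this go through for \emph{any} policy $\pi$ is that the conditional law of $(R',\Tilde{S}')$ given $(S',A')$ is determined solely by the environment dynamics $p_r$ and $p_s$, and hence does not depend on the policy generating the trajectory. Thus for any $(s,a)$,
\begin{align*}
\expe_{\pi}\!\big[R' + \gamma v_{\pi_e}(\Tilde{S}') \,\big|\, S'=s, A'=a\big]
= \expe\!\big[R \,\big|\, S=s,A=a\big] + \gamma\,\expe\!\big[v_{\pi_e}(\Tilde{S}) \,\big|\, S=s,A=a\big] = Q_{\pi_e}(s,a),
\end{align*}
where the last equality is exactly the Bellman fixed-point characterization $f=Q_{\pi_e}$ from Eq.~(\ref{eq:identification-implicit}), rewritten by pulling the inner $\expe_{\Tilde{A}\sim \pi_e(\cdot\mid\Tilde{S})}$ together with $\gamma$ into the definition of $v_{\pi_e}$. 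Subtracting $Q_{\pi_e}(S',A')$ then gives zero inside the outer expectation.

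I do not anticipate a genuine obstacle here; the only mild subtlety worth being explicit about is the policy independence of the conditional distribution of $(R',\tilde{S}')$ given $(S',A')$, since the outer expectation is under $\pi$ while the Bellman equation naturally lives under $\pi_e$. Measurability and boundedness of $f$ are only used to guarantee that Fubini/tower is applicable and that the product $f(S',A')\cdot(\text{TD error})$ is integrable, so these regularity assumptions should be stated at the beginning and then invoked implicitly.
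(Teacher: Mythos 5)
Your proposal is correct and follows essentially the same route as the paper's proof: condition on $(S',A')$ via the tower property, factor out $f(S',A')$, and invoke the Bellman identity $\expe[R' + \gamma v_{\pi_e}(\Tilde{S}') \mid S',A'] = Q_{\pi_e}(S',A')$. Your explicit remark that the conditional law of $(R',\Tilde{S}')$ given $(S',A')$ depends only on the environment dynamics (and hence not on $\pi$) is a point the paper leaves implicit, and it is exactly the justification needed for the lemma to hold for arbitrary $\pi$.
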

\begin{proof}
\begin{align}
    &\expe_{\pi}\left[f(S',A')\left(R' + \gamma v_{\pi_e}(\Tilde{S}') - Q_{\pi_e}(S',A')\right)\right] = \\
    =& \expe_{\pi}\left[\expe_{\pi}\left[f(S',A')\left(R' + \gamma v_{\pi_e}(\Tilde{S}') - Q_{\pi_e}(S',A')\right)\mid S',A' \right]\right] \\
    =& \expe_{\pi}\left[f(S',A')\left(\expe_{\pi}\left[R' + \gamma v_{\pi_e}(\Tilde{S}')\mid S',A' \right] - Q_{\pi_e}(S',A')\right)\right] \\
    =& \expe_{\pi}\left[f(S',A')\left(Q_{\pi_e}(S',A') - Q_{\pi_e}(S',A')\right)\right] = 0
\end{align}
\end{proof}

\textsc{Proof of Neyman-orthogonality}

\begin{proof}
We show the Neyman-orthogonality of our loss $L^{3}_{\pi_e}$.
We define
\begin{align}
    \Delta\Hat{g}(\cdot) \triangleq \Hat{g}(\cdot) - g^*(\cdot) .
\end{align}
The first (Gateaux) derivative is
\begingroup\makeatletter\def\f@size{8}\check@mathfonts
\begin{align}
    &D_gL^{3}_{\pi_e}(\eta,g^*)[\Hat{g}-g^*] = \\
    =& -2\expe_{O' \sim p_b, a \sim \pi_e(a|S')}\left[\Delta\Hat{g}(S',a)\left( 
    2\frac{\delta(A'=a)}{\pi_b(A'|S')}\left\{R' + \gamma v_{\pi_e}(\Tilde{S}') - Q_{\pi_e}(S',A') \right\} + Q_{\pi_e}(S',a)-g^*(S',a)
    \right) \right] \\
    &- 2\expe_{O'\sim p_b;s,a \sim p_b(s)\pi_e(a|s)}\left[\Delta\Hat{g}(s,a)\left(
    2\frac{\pi_e(A'|S')}{\pi_b(A'|S')}w_{e/b}(S'|s,a)
    \left\{R' + \gamma v_{\pi_e}(\Tilde{S}') - Q_{\pi_e}(S',A') \right\}
    + Q_{\pi_e}(s,a) - g^*(s,a)
    \right)\right]
\end{align}
\endgroup
Continuing, we take second derivatives with respect to all the nuisances $\eta = (\pi_b,w_{e/b},Q_{\pi_e})$.
First, for $\pi_b$ we yield
\begingroup\makeatletter\def\f@size{8}\check@mathfonts
\begin{align}
    &D_{\pi_b}D_gL^{3}_{\pi_e}(\eta,g^*)[\Hat{g}-g^*,\Hat{\pi_b} - \pi_b]  \\
    =&-2\expe_{O' \sim p_b, a \sim \pi_e(a|S')}\left[\Delta\Hat{g}(S',a)\Delta\Hat{\pi}_{b}(A'|S')
    2\delta(A'=a)\left\{R' + \gamma v_{\pi_e}(\Tilde{S}') - Q_{\pi_e}(S',A')\right\}(-1)\frac{1}{\pi_b(A'|S')^2}    
    \right] \\
    &-2\expe_{O'\sim p_b;s,a \sim p_b(s)\pi_e(a|s)}\left[\Delta\Hat{g}(s,a)\Delta\Hat{\pi}_{b}(A'|S')
    2\pi_e(A'|S')w_{e/b}(S'|s,a)
    \left\{R' + \gamma v_{\pi_e}(\Tilde{S}') - Q_{\pi_e}(S',A')\right\}(-1)\frac{1}{\pi_b(A'|S')^2}
    \right] \\
    = & 0 .
\end{align}
\endgroup
We use Lemma~\ref{lemma:tderrorzero} to show equality to zero.
\\
Second, for $w_{e/b}$, we yield
\begingroup\makeatletter\def\f@size{8}\check@mathfonts
\begin{align}
    &D_{w_{e/b}}D_gL^{3}_{\pi_e}(\eta,g^*)[\Hat{g}-g^*,\Hat{w}_{e/b} - w_{e/b}] = \\
    =&-2\expe_{O'\sim p_b;s,a \sim p_b(s)\pi_e(a|s)}
    \left[
    \Delta\Hat{g}(s,a)2\frac{\pi_e(A'|S')}{\pi_b(A'|S')}
    \left\{R' + \gamma v_{\pi_e}(\Tilde{S}') - Q_{\pi_e}(S',A')\right\}
    \Delta\Hat{w}_{e/b}(S'|s,a)
    \right] \\
    =& 0
\end{align}
\endgroup

Lastly, for $Q_{\pi_e}$, we have
\begingroup\makeatletter\def\f@size{8}\check@mathfonts
\begin{align}
    &D_{Q_{\pi_e}}D_gL^{3}_{\pi_e}(\eta,g^*)[\Hat{g}-g^*,\Hat{Q}_{\pi_e} - Q_{\pi_e}] = \\
    =& -2\expe_{O' \sim p_b, a \sim \pi_e(a|S')}
    \left[
    \Delta\Hat{g}(S',a)
    \left(
    2\frac{\delta(A'=a)}{\pi_b(A'|S')}
    \left(
    \gamma\expe_{\Tilde{A}' \sim \pi_e(\Tilde{A}'|\Tilde{S}')}[\Delta\Hat{Q}_{\pi_e}(\Tilde{S}',\Tilde{A}')]
    - \Delta\Hat{Q}_{\pi_e}(S',A')
    \right)
    + \Delta\Hat{Q}_{\pi_e}(S',a)
    \right)
    \right]\\
    &- 2\expe_{O'\sim p_b;s,a \sim p_b(s)\pi_e(a|s)}
    \Bigg[
    2\frac{\pi_e(A'|S')}{\pi_b(A'|S')}w_{e/b}(S'|s,a)\Delta\Hat{g}(s,a)
    \left(
    \gamma\expe_{\Tilde{A}' \sim \pi_e(\Tilde{A}'|\Tilde{S}')}[\Delta\Hat{Q}_{\pi_e}(\Tilde{S}',\Tilde{A}')]
    -\Delta\Hat{Q}_{\pi_e}(S',A')
    \right)\\
    &\qquad\qquad +\Delta\Hat{g}(s,a)\Delta\Hat{Q}_{\pi_e}(s,a)
    \Bigg] \\
    =& 
    -4\gamma \expe_{O'\sim p_b,a \sim \pi_e(a|S')}
    \left[
    \Delta\Hat{g}(S',a)
    \frac{\delta(A'=a)}{\pi_b(A'|S')}
    \expe_{\Tilde{A}' \sim \pi_e(\Tilde{A}'|\Tilde{S}')}[\Delta\Hat{Q}_{\pi_e}(\Tilde{S}',\Tilde{A}')]
    \right] \\
    &- 4\expe_{O' \sim p_b; s,a \sim p_b(s)\pi_e(a|s)}
    \left[
    \frac{\pi_e(A'|S')}{\pi_b(A'|S')}w_{e/b}(S'|s,a)
    \Delta\Hat{g}(s,a)
    \left(
    \gamma\expe_{\Tilde{A}' \sim \pi_e(\Tilde{A}'|\Tilde{S}')}[\Delta\Hat{Q}_{\pi_e}(\Tilde{S}',\Tilde{A}')]
    -\Delta\Hat{Q}_{\pi_e}(S',A')
    \right)
    \right] \\
    =& -4
    \expe_{\substack{
    s \sim p_b(s),a \sim \pi_e(a|s), \Tilde{s} \sim p(\Tilde{s}|s,a), \Tilde{a} \sim \pi_e(\Tilde{a}|\Tilde{s}) \\
    S' \sim \beta_e(S'|s,a), A' \sim \pi_e(A'|S'), \Tilde{S}' \sim p(\Tilde{S}'|S',A'), \Tilde{A}' \sim \pi_e(\Tilde{A}'|\Tilde{S}')
    }}
    \left[
    \Delta\Hat{g}(s,a)
    \left(
    \gamma \Delta\Hat{Q}_{\pi_e}(\Tilde{s},\Tilde{a}) + \gamma \Delta\Hat{Q}_{\pi_e}(\Tilde{S}',\Tilde{A}') - \Delta\Hat{Q}_{\pi_e}(S',A')
    \right)
    \right] \\
    = & 0 .
\end{align}
\endgroup
Since $\gamma (\Tilde{s} + \Tilde{S}') \overset{d}{=} S'$, the distribution of $S'$ in the final expectation is $\beta_e(S'|s,a) \triangleq \tfrac{1-\gamma}{\gamma}\sum_{t=1}^{\infty}\gamma^tp_e(S_t=S'|S_0=s,A_0=a)$, which can be interpreted as a conditional discounted stationary state distribution.

Hence, $L^3_{\pi_e}$ is Neyman-orthogonal.
\end{proof}


\newpage 
\subsection{Quasi-oracle efficiency}\label{appx:quasi-oracle-proof}
We prove our loss achieves quasi-oracle efficiency. We write $L^{3}_{\pi_e}$ as
\begin{align}
    L^3_{\pi_e}(\eta,g) = 
    \expe_{O'\sim p_b; a \sim \pi_e(a|S')}
    \left[
    (\phi_1 - g(S',a))^2
    \right]
    +
    \expe_{O'\sim p_b;s,a\sim p_b(s)\pi_e(a|s)}
    \left[
    (\phi_2 - g(s,a))^2
    \right],    
\end{align}
where we define
\begin{align}
    \phi_1 &= 2\frac{\delta(A'=a)}{\pi_b(A'|S')}\left\{R' + \gamma v_{\pi_e}(\Tilde{S}') - Q_{\pi_e}(S',A') \right\} + Q(S',a) , \\
    \phi_2 &= 2\frac{\pi_e(A'|S')}{\pi_b(A'|S')}w_{e/b}(S'|s,a)
    \left\{R' + \gamma v_{\pi_e}(\Tilde{S}') - Q_{\pi_e}(S',A') \right\}
    + Q_{\pi_e}(s,a).
\end{align}
Additionally, we repeat the definitions $\Hat{g} = \argmin_{g \in \mathcal{G}} L^{3}_{\pi_e}(\Hat{\eta},g)$ and $g^* = \argmin_{g \in \mathcal{G}}L^{3}_{\pi_e}(\eta,g)$, where $\Hat{\eta}$ are the estimated nuisances and $\eta$ are the (unknown) true oracle nuisances.

So, we now arrive at
\begin{align}
    L^3_{\pi_e}(\Hat{\eta},\Hat{g}) = & 
    \expe_{O'\sim p_b; a \sim \pi_e(a|S')}
    \left[
    (\Hat{\phi}_1 - \Hat{g}(S',a) + g^*(S',a) - g^*(S',a))^2
    \right] \\
    &+
    \expe_{O'\sim p_b;s,a\sim p_b(s)\pi_e(a|s)}
    \left[
    (\Hat{\phi}_2 - \Hat{g}(s,a) + g^*(s,a) - g^*(s,a))^2
    \right] \\
    =&
    L_{\pi_e}^{3}(\Hat{\eta},g^*)
    + 2\expe_{s,a \sim p_b(s)\pi_e(a|s)}\left[(g^*(s,a)-\Hat{g}(s,a))^2 \right]
    + D_gL_{\pi_e}^{3}(\Hat{\eta},g^*)[\Delta\Hat{g}],
\end{align}
where we obtain the last line by decomposing the square and recognizing terms. Rearranging, we see
\begin{align}
    2\lVert g^* - \Hat{g} \rVert^2_{2,p_b\pi_e} = R_g - D_gL_{\pi_e}^{3}(\Hat{\eta},g^*)[\Delta\Hat{g}] ,
\end{align}
where $R_g = L^{3}_{\pi_e}(\Hat{\eta},\Hat{g}) - L^{3}_{\pi_e}(\Hat{\eta},g^*)$.

We now arrange $D_gL_{\pi_e}^{3}(\Hat{\eta},g^*)$ via a second-order Taylor approximation to the true $\eta$, i.e.,
\begin{align}
    D_gL_{\pi_e}^{3}(\Hat{\eta},g^*)[\Delta\Hat{g}] &=
    D_gL_{\pi_e}^{3}(\eta,g^*)[\Delta\Hat{g}] \\
    &+ D_{\eta}D_gL_{\pi_e}^{3}(\eta,g^*)[\Delta\Hat{g},\Delta\Hat{\eta}] \quad {(=0 \text{\scriptsize{ by Neyman-Orthogonality}})}\\
    &+ \frac{1}{2}D_{\eta}^2D_gL_{\pi_e}^{3}(\Bar{\eta},g^*)[\Delta\Hat{g},\Delta\Hat{\eta},\Delta\Hat{\eta}],
\end{align}
for some $\Bar{\eta} \in \mathrm{star}(\mathcal{H},\eta)$, where denotes the star-shaped set with respect to $\eta$. The last term is of the form
\begingroup\makeatletter\def\f@size{8}\check@mathfonts
\begin{align}
    &D_{\eta}^2D_gL_{\pi_e}^{3}(\Bar{\eta},g^*)[\Delta\Hat{g},\Delta\Hat{\eta},\Delta\Hat{\eta}] = \\
    &= 
    -2\expe_{O'\sim p_b; a \sim \pi_e(a|S')}
    \left[
    \Delta\Hat{g}(S',a)
    \Delta\Hat{\eta}^\top
    \nabla_{\eta\eta}\Bar{\phi}_1
    \Delta\Hat{\eta}
    \right]
    -2\expe_{O'\sim p_b;s,a\sim p_b(s)\pi_e(a|s)}
    \left[
    \Delta\Hat{g}(s,a)
    \Delta\Hat{\eta}^\top
    \nabla_{\eta\eta}\Bar{\phi}_2
    \Delta\Hat{\eta}
    \right].
\end{align}
\endgroup
Continuing, we then have
\begingroup\makeatletter\def\f@size{8}\check@mathfonts
\begin{align}
    2\lVert g^* - \Hat{g} \rVert^2_{2,p_b\pi_e} &= R_g - D_gL_{\pi_e}^{3}(\eta,g^*)[\Delta\Hat{g}] - D_{\eta}^2D_gL_{\pi_e}^{3}(\Bar{\eta},g^*)[\Delta\Hat{g},\Delta\Hat{\eta},\Delta\Hat{\eta}] \\
    &\leq R_g - \frac{1}{2}D_{\eta}^2D_gL_{\pi_e}^{3}(\Bar{\eta},g^*)[\Delta\Hat{g},\Delta\Hat{\eta},\Delta\Hat{\eta}] \\
    &\leq R_g + \lVert g^* - \Hat{g} \rVert_{p_b\pi_e}
    \Bigg\{
    \sum_{i=\{1,2,3\};j=\{1,2,3\};k=\{1,2\}}\sqrt{
    \expe\left[
    (
    \Delta\Hat{\eta}_i
    [\nabla_{\eta\eta}\Bar{\phi}_k]_{i,j}
    \Delta\Hat{\eta}_j
    )^2
    \right]
    }
    \Bigg\}\\
    &\leq R_g + \lVert g^* - \Hat{g} \rVert^2_{p_b\pi_e}
    \left(\sum_{i,j,k}\delta_{ijk}\right)
    +
    \Bigg\{
    \sum_{i=\{1,2,3\};j=\{1,2,3\};k=\{1,2\}}
    \frac{1}{\delta_{ijk}}
    \expe\left[
    (
    \Delta\Hat{\eta}_i
    [\nabla_{\eta\eta}\Bar{\phi}_k]_{i,j}
    \Delta\Hat{\eta}_j
    )^2
    \right]
    \Bigg\} ,
\end{align}
\endgroup
where we achieve the first inequality by recognizing $D_gL_{\pi_e}^{3}(\eta,g^*)[\Delta\Hat{g}] \geq 0$, the second using the Cauchy-Schwarz inequality, and the third using the AM-GM inequality for any constants $\delta_{ijk}>0$ such that $\sum_{i,j,k}\delta_{ijk} < 2$.
This then finally results the inequality
\begingroup\makeatletter\def\f@size{8}\check@mathfonts
\begin{align}
    2\lVert g^* - \Hat{g} \rVert^2_{2,p_b\pi_e} \leq& \frac{1}{2 - \sum_{i,j,k}\delta_{ijk}}
    \Bigg(
    R_g 
    +
    \expe\bigg[
    C_1^2\Delta^4\Hat{\pi}_b 
    + C_2^2\Delta^2\Hat{\pi}_b\Delta^2\Hat{Q}_{\pi_e} 
    + C_3^2\Delta^2\Hat{\pi}_b\Delta^2\Hat{w}_{e/b} 
    + C_4^2\Delta^2\Hat{w}_{e/b}\Delta^2\Hat{Q}_{\pi_e}
    \bigg]
    \Bigg)\\
    \leq&
    \frac{1}{2 - \sum_{i,j,k}\delta_{ijk}}
    \Bigg(
    R_g 
    +
    \lVert C_1 \Delta^2\Hat{\pi}_b \rVert^2_2
    + \lVert C_2\Delta\Hat{\pi}_b\Delta\Hat{Q}_{\pi_e} \rVert^2_2
    + \lVert C_3\Delta\Hat{\pi}_b\Delta\Hat{w}_{e/b} \rVert^2_2
    + \lVert C_4\Delta\Hat{w}_{e/b}\Delta\Hat{Q}_{\pi_e} \rVert^2_2
    \Bigg)\\
    \lesssim&
    \frac{1}{2 - \sum_{i,j,k}\delta_{ijk}}
    \Bigg(
    R_g 
    +
    \lVert \Delta^4\Hat{\pi}_b \rVert^2_2
    + \lVert \Delta^2\Hat{\pi}_b\Delta^2\Hat{Q}_{\pi_e} \rVert^2_2
    + \lVert \Delta^2\Hat{\pi}_b\Delta^2\Hat{w}_{e/b} \rVert^2_2
    + \lVert \Delta^2\Hat{w}_{e/b}\Delta^2\Hat{Q}_{\pi_e} \rVert^2_2
    \Bigg),
\end{align}
\endgroup
where the $C_1,\ldots,C_4$ collect all terms that do not contain $\Delta$ terms of difference between estimated and true nuisances. In the last steps, $x \lesssim y$ is taken to mean there exists a constant $M>0$ s.t. $x \leq My$. The last inequality is achieved by extracting $\Delta\Bar{\eta}$ terms from the $C$s and noting $\lVert \Delta\Bar{\eta} \rVert \leq \lVert \Delta\Hat{\eta} \rVert$ since $\Bar{\eta}$ lies between $\Hat{\eta}$ and the oracle $\eta$. For clarity of exposition, the computation of the Hessian terms through which the $C's$ contain $\Delta\Bar{\eta}$ terms is postponed to the end of the proof.
Lastly, we make use of Hölder's inequality
\begingroup\makeatletter\def\f@size{8}\check@mathfonts
\begin{align}
    &2\lVert g^* - \Hat{g} \rVert^2_{2,p_b\pi_e} \\
    \lesssim 
    &\frac{1}{2 - \sum_{i,j,k}\delta_{ijk}}
    \Bigg(
    R_g 
    +
    \lVert \Delta^4\Hat{\pi}_b \rVert^2_2
    + \lVert \Delta^2\Hat{\pi}_b\rVert^2_4 \lVert\Delta^2\Hat{Q}_{\pi_e} \rVert^2_4
    + \lVert \Delta^2\Hat{\pi}_b\rVert^2_4 \lVert\Delta^2\Hat{w}_{e/b} \rVert^2_4
    + \lVert \Delta^2\Hat{w}_{e/b}\rVert^2_4 \lVert\Delta^2\Hat{Q}_{\pi_e} \rVert^2_4
    \Bigg)\\
    \lesssim &
    \lVert \Delta^4\Hat{\pi}_b \rVert^2_2
    + \lVert \Delta^2\Hat{\pi}_b\Delta^2\Hat{Q}_{\pi_e} \rVert^2_2
    + \lVert \Delta^2\Hat{\pi}_b\Delta^2\Hat{w}_{e/b} \rVert^2_2
    + \lVert \Delta^2\Hat{w}_{e/b}\Delta^2\Hat{Q}_{\pi_e} \rVert^2_2
\end{align}
\endgroup
This finishes the proof. The double-robustness property is proved trivially by plugging in the condition (either $\Delta\hat{Q}_{\pi_e} \rightarrow 0$ or $\Delta\hat{\pi}_{b} \rightarrow \Delta\hat{w}_{e/b} \rightarrow 0$) into the here obtained bound.

\textit{Note on assumptions:} The proof of Quasi-Oracle efficiency holds under the standard assumptions of sample-splitting (first and second stage are fit on separate parts of the dataset), i.i.d. data, well-behaved (convex) risk, sufficient convergence rates of nuisances, and boundedness of first moments. Of specific note is the i.i.d. assumptions, which we assume for ease of exposition, while actually only needing a less strict requirement of the empirical expectation concentrating around the exact population expectation. For a Markov chain induced by following the policy $\pi_b$ (a single trajectory), it is enough for it to be ergodic. Less formally but more intuitively, we simply need the \textit{effective} sample size to be infinite in the asymptote.

For completeness, we write out the Hessians of $\phi$'s with respect to $\eta = (\pi_b,w_{e/b},Q_{\pi_e})$. These terms are all included in the variables $C_1,\ldots,C_4$, since they do not include any differences between estimated and true nuisances. We thus have
\begingroup\makeatletter\def\f@size{8}\check@mathfonts
\begin{align}
&\nabla_{\eta\eta}\Bar{\phi}_1 = 
\begin{bmatrix}
D_{111}\Delta^2\Bar{\pi}_b(A'|S') & 0 &
D_{113}\Delta\Bar{\pi}_b(A'|S')\Delta\Bar{Q}_{\pi_e}(S',A') \\
0 & 0 & 0 \\
D_{113}\Delta\Bar{\pi}_b(A'|S')\Delta\Bar{Q}_{\pi_e}(S',A') & 0 & 0
\end{bmatrix}
\\
&D_{111} = 4\delta(A'=a)\left\{R' + \gamma v_{\pi_e}(\Tilde{S}') - Q_{\pi_e}(S',A') \right\}\frac{1}{\pi_b(A'|S')^3}\\
&D_{113} = 2\frac{\delta(A'=a)}{\pi_b(A'|S')^2}
\\
&\nabla_{\eta\eta}\Bar{\phi}_2 = 
\begin{bmatrix}
D_{211}\Delta^2\Bar{\pi}_b(A'|S') 
& D_{212}\Delta\Bar{\pi}_b(A'|S')\Delta\Bar{w}_{e/b}(S'|s,a)
& D_{213}\Delta\Bar{\pi}_b(A'|S')\Delta\Bar{Q}_{\pi_e}(S',A')
\\
D_{212}\Delta\Bar{\pi}_b(A'|S')\Delta\Bar{w}_{e/b}(S'|s,a)
& 0
& D_{223}\Delta\Bar{w}_{e/b}(S'|s,a)\Delta\Bar{Q}_{\pi_e}(S',A')
\\
D_{213}\Delta\Bar{\pi}_b(A'|S')\Delta\Bar{Q}_{\pi_e}(S',A')
& D_{223}\Delta\Bar{w}_{e/b}(S'|s,a)\Delta\Bar{Q}_{\pi_e}(S',A')
& 0
\end{bmatrix}
\\
&D_{211} = 4\pi_e(A'|S')w_{e/b}(S'|s,a)\left\{R' + \gamma v_{\pi_e}(\Tilde{S}') - Q_{\pi_e}(S',A') \right\}\frac{1}{\pi_b(A'|S')^3}\\
&D_{212} = -2\pi_e(A'|S')\left\{R' + \gamma v_{\pi_e}(\Tilde{S}') - Q_{\pi_e}(S',A') \right\}\frac{1}{\pi_b(A'|S')^2}\\
&D_{213} = 2\pi_e(A'|S')w_{e/b}(S'|s,a)\frac{1}{\pi_b(A'|S')^2}\\
&D_{223} = -2\frac{\pi_e(A'|S')}{\pi_b(A'|S')},
\end{align}
\endgroup
where all the constants elements $D$ are evaluated at $\Bar{\eta}$.


\subsection{Identification}

\textsc{Proof of Theorem ~\ref{thm:identifiability-full}} \label{proof:identifiability-full}
\begin{proof}

\begin{align}
    \xi_{\pi_e}(s,a) &\triangleq \expe\left[R_0 + \sum_{t=1}^{\infty}\gamma^t R_t[\pi_e(\cdot|S_t)] \middle|S_0 = s, A_0 = a \right].\\
    &= \expe_{\pi_e}\left[\sum_{t=0}^{\infty}\gamma^t R_t   \middle| S_0 =s,A_0 = a \right] \\
    &= Q_{\pi_e}(s,a) \\
    &= \expe_{\pi_b}\left[R_0 + \sum_{t=1}^{\infty}\gamma^t \rho_{1:t} R_t \middle | S_0=s,A_0=a\right]
\end{align}

The first equality follows by definition, while the second equality is by consistency and unconfoundedness assumptions, and the final equality is by the weak positivity assumption.

\textit{Technical remark:} For the last step, we must assume the rewards are bounded, $|R_t| \leq R_{\max}$, such that we can apply the dominated convergence theorem to take the infinite sum out of the expectation, apply importance-sampling style change of distribution element-wise to each $R_t$ expectation term and then collapse everything into the final formula.
\end{proof}

\textsc{Proof of Theorem ~\ref{thm:identifiability-short}} \label{proof:identifiability-short}
\begin{proof}
We prove the identification is valid by showing that Eq.~(\ref{eq:identification-implicit}) is (i) observable and (ii) has a unique solution (unique up to equality almost everywhere). 

For the question of observability, we first notice that the inner expectation is over a known distribution, i.e., the treatment assignment under $\pi_e$. The remaining randomness is then in the outer expectation over $R,\Tilde{S}$, conditional on $S=s,A=a$. In the MDP, the reward and transition dynamics are the source of this randomness, meaning this randomness is invariant to the policy followed. We can thus freely write the RHS of Eq.~(\ref{eq:identification-implicit}) as
\begin{equation}
    f(s,a) = \expe_{\pi_b}\left[R + \gamma\expe_{\Tilde{A} \sim \pi_e(\cdot|\Tilde{S})}[f(\Tilde{S},\Tilde{A})] \middle|S=s,A=a \right],
\end{equation}
And clearly, the RHS is observable.

The uniqueness of the solution of the Bellman equation is a well-known result in RL. A rigorous proof of which is available, for example, in \citep{Sutton.2018}. Informally, defining the RHS as the Bellman operator $T^{\pi_e}$ on $f$, it is shown that this operator is a $\gamma$-contraction mapping in the space of bounded measure functions on $\mathcal{S} \times \mathcal{A}$. By the Banach fixed-point theorem, this implies that $T^{\pi_e}$ admits a unique fixed point. Since $f = Q_{\pi_e}$ satisfies Eq.~(\ref{eq:identification-implicit}), we have shown that $Q_{\pi_e}$ is the unique solution (up to equality almost everywhere). 
\end{proof}

\subsection{Proof that \texorpdfstring{$L^{3}_{\pi_e}$}{L3} targets \texorpdfstring{$Q_{\pi_e}$}{Qe}}\label{proof:L3_correct_min}

For completeness, we prove that $L^{3}_{\pi_e}$ is minimized by $Q_{\pi_e}$.
\begin{proof}
    We begin by reversing the square completion
    \begin{align*}
        &L^{3}_{\pi_e}(\eta,g) \;\overset{\arg\min}{=}\; \Hat{L}^{2}_{\pi_e}(\eta,g) \\
    =&\expe_{O' \sim p_b}\Bigg\{\sum_{a} \pi_e(a|S') \left(Q_{\pi_e}(S',a) - g(S',a) \right)^2
    + 2\left\{R' + \gamma v_{\pi_e}(\Tilde{S}') - Q_{\pi_e}(S',A') \right\}\frac{\pi_e(A'|S')}{\pi_b(A'|S')} \\
    &\times
    \Bigg[
    Q_{\pi_e}(S',A')-g(S',A')
    + \expe_{s,a \sim p_b(s)\pi_e(a|s)}\left[(Q_{\pi_e}(s,a)-g(s,a))w_{e/b}(S'|s,a)\right]
    \Bigg]\Bigg\}
    \end{align*}
    The proof can be completed using Lemma~\ref{lemma:tderrorzero} to remove the second term from the expectation (using the law of iterated expectations on $S',A'$). With only the first term remaining, we recognize $L^1_{\pi_e}$. Alternatively, we can arrive at $L^1_{\pi_e}$ by reversing the construction of $L^2_{\pi_e}$, namely that
    \begin{align}
        L^2_{\pi_e} = \expe_{O' \sim p_b}\left[L^{1}_{\pi_e}(\eta,g)+\eif(L^{1}_{\pi_e}(\eta,g), O')\right] = \expe_{O' \sim p_b}\left[L^{1}_{\pi_e}(\eta,g)\right]= L^{1}_{\pi_e}(\eta,g),
    \end{align}
    since efficient influence functions are mean zero by definition.
    Finally, showing that $Q_{\pi_e}$ minimizes $L^{1}_{\pi_e}$ is trivial.
\end{proof}


\endgroup
\section{Implementation details}\label{appx:implementation_details}

Anonymous code is available at \url{https://github.com/EmilJavurek/Orthogonal-Q-in-MDPs}. Upon acceptance, we move our code to a public GitHub repository. All experiments are implemented in the Taxi environment from the OpenAI Gym package \citep{Brockman.652016}. Since the focus of our work is on second-stage estimation, we take the ground-truth oracle for the density ratio nuisances, while the first stage $Q$ is estimated for each method. We list all relevant hyperparameters in the following table. All experiments were conducted for 5 runs with different seeds.

\begin{table}[htbp]
\centering
\renewcommand{\arraystretch}{1.2}

\label{tab:hyperparams}
\begin{tabular}{|l|l|l|}
\hline
\textbf{Component} & \textbf{Hyperparameter} & \textbf{Value} \\ \hline

\multirow{2}{*}{Taxi environment} 
 & $\gamma$ & 0.9 \\
 & max\_steps & 100 \\ \hline

\multirow{3}{*}{
\shortstack[l]{Online $Q$ control \\
\textit{(to construct policies via $Q^*$)}
}
} 
 & episodes & 5000 \\
 & $\epsilon$ & 0.05 \\
 & $\alpha$ & 0.1 \\ \hline

$\pi_b$ & $\epsilon$ & 0.5 \\ \hline
$\pi_e$ & $\epsilon$ & 0.1 \\ \hline
$\mathcal{D}_{\pi_b}$ & $n$ & 3000 \\ \hline

\multirow{2}{*}{
\shortstack[l]{Ground-truth reference $Q_{\pi_e}$ \\ 
online Expected SARSA prediction }
} 
 & episodes & 100000 \\
 & $\alpha$ & 0.9 \\ \hline

\multirow{3}{*}{1st Stage} 
 & $\hat{\pi}_b$ & oracle \\
 & $\hat{w}_{e/b}$ & oracle \\
 & $\hat{Q}_{\pi_e}^{1}$ & FQE \\ \hline

DR-learner & iterations & 1000 \\ \hline

FQE & iterations & 50 \\ \hline

$Q$-regression & --- & --- \\ \hline
MQL & iterations & 500 \\ \hline
\end{tabular}
\caption{Hyperparameter settings for experiments in the Taxi environment. }
\end{table}


\end{document}